\definecolor{mydarkgreen}{RGB}{39,130,67}
\definecolor{mydarkred}{RGB}{192,25,25}
\definecolor{mydarkblue}{RGB}{0,0,140}
\definecolor{darkgreen}{rgb}{0.00,0.5,0.00}
\newtheorem{theorem}{Theorem}
\newtheorem{assumption}{Assumption}
\newtheorem{definition}{Definition}
\newtheorem{lemma}{Lemma}
\newtheorem{proposition}{Proposition}
\crefname{assumption}{assumption}{assumptions}
\theoremstyle{definition}
\newtheorem{example}{Example}
\let\lll\ll
\renewcommand{\ll}{\mathbf{l}}
\newcommand{\cL}{\mathcal{L}}
\newcommand{\cS}{\mathcal{S}}
\newcommand{\cO}{\mathcal O}
\newcommand{\R}{\mathbb R}
\newcommand{\eqdef}{\stackrel{\text{def}}{=}}
\DeclareMathOperator{\argmin}{argmin}
\def\<#1,#2>{\langle #1,#2\rangle}
\newcommand{\SVD}{{\rm SVD}}
\renewcommand{\leq}{\leqslant}
\renewcommand{\geq}{\geqslant}
\def\<{\langle}
\def\>{\rangle}
\def\|{\Vert}
\def\eps{\varepsilon}
\newcommand{\rank}{{\rm rank}}
\newcommand{\dist}{{\rm dist}}
\newcommand{\esp}[1]{\mathbb{E}\left[#1\right]}
\newcommand{\NRM}[1]{{{\left\| #1\right\|}}} 
\newcommand{\set}[1]{{{\left\{ #1\right\}}}} 
\newcommand{\proba}[1]{\mathbb{P}\left(#1\right)}
\newcommand{\gau}[1]{\mathcal{N}(0, #1)}
\providecommand{\myparagraph}[1]{\paragraph{#1}}
\renewcommand{\P}{\mathbb{P}}
\newcommand{\cR}{\mathcal{R}}
\newcommand{\cB}{\mathcal{B}}
\newcommand{\cN}{\mathcal{N}}
\newcommand{\cX}{\mathcal{X}}
\newcommand{\cW}{\mathcal{W}}
\newcommand{\cU}{\mathcal{U}}
\newcommand{\cF}{\mathcal{F}}
\newcommand{\N}{\mathbb{N}}
\newcommand{\cD}{\mathcal{D}}
\newcommand{\op}{{\rm op}}
\DeclareMathOperator{\Tr}{Tr}
\title{Meta-learning of shared linear representations beyond well-specified linear regression
}
\date{}
\author{
Mathieu Even$^{1}$
\qquad \qquad
Laurent Massoulié$^{2}$
\\
\\
$^{1}$PreMeDICaL Inria-Inserm Project Team, University of Montpellier, France
\\
$^{2}$ARGO Project Team, Inria-ENS Paris, PSL Research University, France
}
\begin{document}

\maketitle

\begin{abstract}
Motivated by multi-task and meta-learning approaches, we consider the problem of learning structure shared by tasks or users, such as shared low-rank representations or clustered structures.
While all previous works focus on well-specified linear regression, we consider more general convex objectives, where the structural low-rank and cluster assumptions are expressed on the optima of each function. We show that under mild assumptions such as \textit{Hessian concentration} and \textit{noise concentration at the optimum}, rank and clustered regularized estimators recover such structure, provided the number of samples per task and the number of tasks are large enough.
    We then study the problem of recovering the subspace in which all the solutions lie, in the setting where there is only a single sample per task: we show that in that case, the rank-constrained estimator can recover the subspace, but that the number of tasks needs to scale exponentially large with the dimension of the subspace.
Finally, we provide a polynomial-time algorithm via nuclear norm constraints for learning a shared linear representation in the context of convex learning objectives.
\end{abstract}

\section{Introduction}

A central task in modern machine learning is the training of a common model from local data sets held by individual agents \citep{mcmahan2017fl}. A typical application is when (an eventually large amount of) users\footnote{In this work, the terms \textit{agent} and \textit{user} are used in an interchangeable way, as well as the term \textit{task} that refers to each agent/user objective.} (\emph{e.g.}~mobile phones, hospitals) want to make predictions (\emph{e.g.}~next-word prediction, treatment prescriptions), but each has access to very few data samples, hence the need for collaboration.
As highlighted by many recent works \citep[e.g.]{hanzely_lower_2020,mansour_three_2020}, while training a global model yields better statistical efficiency on the combined datasets of all agents by increasing the number of samples linearly in the number of agents, this approach can suffer from a dramatically poor generalization error on local datasets. 
A solution to this generalization issue is the training of \emph{personalized} models, a midway between a single model shared by all agents and models trained locally without any coordination.
An ideal approach would take the best of both worlds: increased statistical efficiency for each user by using more samples (from the pool of users), while keeping local generalization errors low.

The problem of training machine learning models personalized for each agent is an instance of \textit{multi-task learning}, each task being the local objective of each user. It is however impossible without any assumption on the structure of the tasks or similarity between agents to ensure that agents can gain any statistical efficiency from collaboration \citep[Theorem 1]{even2022on}.
There thus have been different set of assumptions, backed by empirical evidence, to design algorithms and strategies that guarantee some kind of collaboration speedup in terms of statistical guarantees.
The first set of assumptions are \textit{similarity assumptions}: each agent is assumed to be ‘‘close'' to a certain number other agents. This can be formalized as a \textit{cluster} assumption \citep{jacob2008convexcluster,liu2017hierarchical,ghosh2020clusteredFL} on the users or tasks (they form groups, and within each group/cluster the objective is the same), or as a \textit{discrepancy-based} assumption (a metric between tasks quantitatively gives how close they are and thus how an agent can benefit from collaboration with another agent) \citep{mansour_three_2020,even2022on,ding2022collaborative,deng2023distributed}. Clusters or discrepancies are usually assumed to be unknown and based on their knowledge tasks become simpler.

Assumptions can also be leveraged on the tasks themselves: they all share some common structure. The most common structural assumption that is made is the \textit{shared representation} assumption: local datasets might be heterogeneous and in very large dimension, leading to arbitrarily different tasks, yet high-dimensional data may share some representation in a much smaller space \citep{bengio2013representation}.
Then, low-dimensional representations can be translated into some structure on local objectives themselves. In the case of learning deep models for instance, \citet{pmlr-v139-collins21a} explain that tasks may share most of the layers, while personalization can be expressed only in the last ones that are user-specific.
An instance of representation assumption that is amenable for statistical analysis purposes is the \textit{shared linear representation} assumption for linear quadratic regression: for a given task $i$, data is generated according to some noisy linear relation $y_{ij}=\langle w_i^\star,x_{ij}\rangle +\eps_{ij}$ for data points $(x_{ij},y_{ij})$, $j\in[m]$ and some $w_i^\star$ to recover. A common assumption is thus to assume that all \textit{local heads} $w_i^\star\in\R^d$ for all agents lie in a small subspace of $\R^d$ of dimension $r\lll d$.

In both cases -- user assumptions or structural assumptions --, there exists a structure to be learnt that makes the problem at hand much simpler in the sense that it requires a reduced amount of samples per task to solve. This is typical of \textit{meta-learning} or \textit{learning-to-learn} approaches in deep-learning, where the process of learning itself is being improved by multiple learning episodes (that consist of the various users' task in our case) \citep{hospedales2022metalearning}. Whether a cluster structure, dissimilarity between clients, or a shared representation is known before solving each client task, the whole procedure becomes much easier.

In this work, we study the problem of jointly learning $n$ models for $n$ agents, based on the local datasets of these users, while making the kind of assumption described above. More formally, each agent $i$ indexed by $i\in[n]$ has a dataset of size $m$, $\set{\xi_{ij}}_{j\in[m]}$ with $\xi_{ij}\sim \cD_i$ and $\cD_i$ its local distribution, and wishes to minimize its \textit{generalization error}, defined as for some loss $F_i:\R^d\times \Xi\to \R$:
\begin{equation*}
	f_i(w)=\esp{F_i(w,\xi_i)}\,,\quad w\in\R^d\,,\quad \xi_i\sim\cD_i\,,
\end{equation*}
In order to capture the minimization of each local function and thus \textit{personalization}, the objective we want to minimize then writes as:
\begin{equation}\label{eq:test_loss}
	f(W)=\frac{1}{n}\sum_{i=1}^n f_i(w_i)\,,
\end{equation}
for $ W=(w_1|\ldots|w_n)\in\R^{d\times n}$.
Based on the $m$ samples per agent we have, we denote as $\cL$ the empirical risk, that writes:
\begin{equation}\label{eq:loss_emp}
	\cL(W)=\frac{1}{nm}\sum_{(i,j)\in[n]\times[m]} F_i(w_i,\xi_{ij})\,,
\end{equation}
for $ W=(w_1|\ldots|w_n)\in\R^{d\times n}$.
By minimizing this empirical risk, we wish to obtain good minimizers for the true test loss (\Cref{eq:test_loss}), under adequate assumptions.
For our problem, the structural assumptions described above then read as follows.
\begin{assumption}[Low rank representation]\label{hyp:low_rank}
	There exist $\set{w_i^\star}_{i\in[n]}$ such that for all $i\in[n]$, $w_i^\star$ minimizes $f_i$ and such that the rank of the matrix $W^\star=(w_1^\star|\ldots|w_n^\star)\in\R^{d\times n}$ is at most $r$.
    Equivalently, there exist an orthonormal matrix $U_\star\in\R^{d\times r}$ and $V^\star=(v_1^\star,\ldots,v_n^\star)\in\R^{r\times n}$ such that $w_i^\star=U_\star v_i^\star$ for all $i\in[n]$.
\end{assumption}
\begin{assumption}[Clustered clients]\label{hyp:clusters}
There exist $\set{w_i^\star}_{i\in[n]}$, $\set{c_s^\star}_{s\in[r]}$ and $\tau^\star:[n]\to[r]$ such that for all $i$, $w_i^\star$ minimizes $f_i$ and  $w_i^\star=c_{\tau^\star(i)}^\star$.
\end{assumption}
Note that \Cref{hyp:clusters} is in fact a subcase of \Cref{hyp:low_rank}.
The purpose of our paper is then to study natural minimizers under these assumptions, for convex objectives $f_i$; we provide typical examples of such tasks.

\begin{example}\label{remark:ex}
    The following examples fit in our framework.
    \begin{enumerate}
        \item \textit{Well-specified linear regression with the quadratic loss}: $\xi_{ij}=(x_{ij},y_{ij})$ where $x_{ij}$ are \textit{i.i.d.} random variable and $y_{ij}=\langle w_i^\star,x_{ij}\rangle + z_{ij}$ for independent \textit{label noise }$z_{ij}\sim\cN(0,\eps^2)$, and $F_i(w,(x,y))=\frac{1}{2}(w^\top x-y)^2$.
        \item \textit{Generalized linear models} (GLMs): $F_i(w,\xi_{ij}=(x_{ij},y_{ij})=\ell_i(\langle w,x_{ij} \rangle ,y_{ij})$, for some convex loss $\ell_i$. Classification with the logistic loss and meta learning of linearized physical systems \citep{blanke:hal-04513216} are particular instances of GLMs.
    \end{enumerate}
\end{example}


\subsection{Related works}

\noindent \myparagraph{Multi-task learning.}
Multi-task learning (MTL) is a general approach to improve generalization guarantees of single tasks knowledge of other tasks as an inductive bias \citep{Caruana1997}. On the theory side, some structural properties need to be made in order to show the advantage of MTL, such as a small variance between tasks \citep{cesa-bianchi2022multitask,pmlr-v97-denevi19a}, a shared sparse support for multi-task feature recovery \citep{argyriou2006feature,lounici2009sparsity}, and finally a low rank structure that takes the form of a lower dimensional linear representation \citep{pmlr-v178-boursier22a}.
Several convex penalties exist to learn structures shared between tasks \citep{jacob2008convexcluster,pmlr-v37-ciliberto15}, without provable guarantees.

\noindent \myparagraph{Learning clusters of clients.}
Clustering historically refers to partitioning data points into a few clusters, a task related to multi-class classification. It provides a theory-friendly framework to study and more importantly develop similarity based algorithms when in multi-task, collaborative, or federated learning, the agents are assumed to forms clusters of similar users \citep{clustered_FL}. Clustered agents can be enforced by regularized objectives \citep{mansour_three_2020,jacob2008convexcluster,zhou2011clustered}, iterative clustering algorithms \citep{ghosh2020clusteredFL}. Providing generalization guarantees and a collaborative speedup under a cluster assumption for our problem is however a challenging problem. \citet{ghosh2020clusteredFL} proves that provided a sufficiently good initialization, the clusters can be learnt with a speedup proportional to clusters' population, while \citet{even2022on} build a similarity graph and obtains similar results without any assumption on the initialization, but with degraded dimension dependency.

\noindent \myparagraph{Meta-Learning of linear representations.}
Motivated by empirical evidence in deep-learning tasks \citep{bengio2013representation,pmlr-v139-collins21a}, linear representations are a simple yet insightful proxy for theoretical works to model simple structures.
We describe in this paragraph theoretical advances in estimating such linear representations: it is to be noted that most previous works focus on the \textit{quadratic linear regression} setting, much simpler than ours, that fits a general convex framework.

\citet{pmlr-v139-tripuraneni21a} shows it is impossible to estimate a representation of dimension $r\lll d$, if the total number of samples $nm$ satisfies $nm=\cO(rd)$.
\citet{rohde2011,du2021fewshot} study the lowrank estimator as we define it, and show that the representation is learned if the total number of samples satisfies $nm\gg rd$ with $m\gg d$ samples per task: such a large number of samples per task is prohibitive, since under such assumptions collaboration is no longer needed.
\citet{maurer2016benefits} considers general convex objectives like us, but under Lipschitz (bounded gradients) assumptions, while \citet{maurer2006boundsMTL} considers general Hilbert-Schmidt operators.

Convex relaxations via nuclear norm regularization have then been studied \citep{pmlr-v178-boursier22a,rohde2011,pmlr-v30-Pontil13} and provably learn the representations provided that $n\gg d$ and $m\gg \sqrt{rd}$.
Finally, a long line of work relaxes the rank constrain by factorizing the matrix $W$ as $W=AB^\top$ where $A\in\R^{d\times r}$ and $B\in \R^{n\times r}$, an approach known as \textit{Burer-Monteiro factorization} \citep{Burer2003,Burer2004}. 
\citet{pmlr-v139-tripuraneni21a} showed in the quadratic regression setting with random inputs, that the local minima induced by Burer-Monteiro factorization all led to estimating the linear representation, provided that $nm\gg r^4d$ with $m\gg r^4$.
Then, algorithms based on alterative minimizations \citep{pmlr-v139-collins21a,thekumparampil2021statistically} require $nm\gg r^2 d$ and $m\gg r^2$, together with an initialization with non-trivial alignment with the ground truth representation.
Finally, algorithms specific to meta-learning such as MAML or ANIL \citep{pmlr-v162-collins22aMAML,yuksel2023firstorder} or even FedAvg \citep{collins2022fedavg} can learn the representation if the model is parameterized using a Burer-Monteiro factorization if the number of samples is large enough, even if the rank is mispecified \citep{yuksel2023firstorder}.

\citet{pmlr-v139-tripuraneni21a,duchi2022subspace} showed that a Method-of-Moment (MoM) estimator can recover the subspace even if $m=1$, as long as $n\gg r^2d$.
Very recently and concurrently to this work, \citet{niu2024collaborativelearningsharedlinear} showed that a clever modification of the MoM estimator leads to optimal statistical rates for learning shared linear representations in planted noisy linear regression.
MoM estimators are however specifically tailored for the planted linear regression problem.

All the aforementioned works study the linear quadratic regression setting with random inputs. Going beyond this simplified setting to learn linear representations is thus a question of interest, and \citet{collins2023provable} does so, by studying 2-layered ReLU networks in a multi-task setting. Our work also goes in this direction: we relax the assumptions to the general convex setting.
Such relaxations have also been considered in the compressed sensing literature, such as \citet{agarwal2012fast} and related works, that studied Iterative Thresholding algorithms under general restricted strong convexity and smoothness assumptions.
The broad literature on online meta learning also considers general convex objectives. However, as described below, these works cannot be compared to ours.

\noindent \myparagraph{Online Meta learning.}
There has been a lot of focus recently on \textit{online meta learning}, where tasks and sample per tasks are accessed iteratively one after the other \citep{khodak2019provable,pmlr-v98-bullins19a,pmlr-v80-lee18a,khodak2019adaptiveBDML,denevi2019onlinewithinonline,pmlr-v97-denevi19a}.
Techniques from online convex optimization are used to derive regret bounds for gradient methods.
If the optimal models for each task (in the regret sense) are close to each other, then online meta-learning techniques learn good initializations that makes learning easier for new tasks.
The regularity assumptions made in such work are the same as in convex online optimization: convex objectives and smoothness or Lipchitzness.
Such results cannot be compared to ours, since online meta learning does not learn representations, but rather learns an initialization close to all true optima, therefore replacing the dependency on the initial distance to true optima in online learning for each task by a smaller quantity \citep{khodak2019adaptiveBDML}.
In our framework, tasks' optima may however all be far from each other.

Overall, the works that have the same objective as ours (that is, establishing sample complexity rates for estimating a shared low rank representation) are either specifically tailored for linear regression and cannot be extended beyond this framework \citep{pmlr-v139-tripuraneni21a,duchi2022subspace,niu2024collaborativelearningsharedlinear}, assume bounded gradients \citep{maurer2016benefits}, or establish rates only for linear quadratic regression (\Cref{remark:ex}.1) \citep[etc]{yuksel2023firstorder,pmlr-v178-boursier22a,pmlr-v139-collins21a}.

\subsection{Our contributions and outline of the paper}

We first introduce our setting and assumptions in \Cref{sec:hyp}: the mild noise assumptions (concentration of the noise \textit{at the optimum}) and the pointwise concentration of Hessians we introduce are a core contribution of our paper.
We study rank-constrained and clustered-contrained estimators in \Cref{sec:rank} to upper bound the sample complexity of minimizing each function and learning the global structure, with applications to \textit{few-shot} learning on a new task in \Cref{sec:fewshot_meta}.
We then study the recovery of the subspace in the degenerate case where $m=1$ (only one sample per agent) via rank-constraint, and show that it is still feasible, but requires a larger number of tasks (that scales exponentially with $r$).
Finally, we relax the rank constraint into a nuclear norm constraint, and provide statistical guarantees for the obtained estimator, that can be efficiently computed.

\section{Setting, assumptions and notations}

\noindent \myparagraph{Notations.}
For $w\in\R^d$, $\NRM{w}^2=\sum_{k=1}^dw_k^2$ is its squared Euclidean norm and $\NRM{w}_1=\sum_{k=1}^d|w_k|$ is $\ell^1-$norm. For $W\in\R^{d\times n}$ with singular values $\lambda_1,\ldots,\lambda_p$, $\NRM{W}_F^2=\sum_{(i,k)\in[n]\times [d]}W_{ki}^2=\sum_{k=1}^p \lambda_k^2$ is its squared Frobenius norm, $\NRM{W}_\op=\max_{k\in[p]}|\lambda_k|$ is its operator norm, and $\NRM{W}_*=\sum_{k=1}^p|\lambda_k|$ is its nuclear norm.
For some differentiable function $g:\cX\to\R$ on $\cX\subset\R^d$, we define its \textit{Bregman divergence} $D_g:\cX\times \cX\to \R$ as: for all $w,w'\in\cX$, $D_g(w,w')=g(w)-g(w')-\langle \nabla g(w'),w-w'\rangle $.
$I_d$ is the identity matrix in $\R^d$.
For symetric matrices $A,B$, $A\preceq B$ means that all the eigenvalues of $B-A$ are non-negative. 
$\cB_d$ is the (closed) unit ball of $\R^d$, $\cS^{d-1}$ is the unit sphere.
The \textit{principal angle distance} between two subspaces or two orthogonal matrices is defined in \Cref{app:angle}.

\subsection{Main assumptions}\label{sec:hyp}

We here describe the assumptions we make on local tasks $f_i$: note that we go well beyond the classical linear regression with quadratic loss setting, where local functions are assumed to be of the $f_i(w)=\sum_j (\langle w,x_{ij}\rangle -y_{ij})^2$, for noisy labels $y_{ij}=\langle x_{ij},w_i^\star\rangle +\eps_{ij}$ and random inputs and noise $x_{ij},\eps_{ij}$.

\noindent \myparagraph{Assumptions on local functions $f_i$ and $F_i$.}
Besides using \Cref{hyp:low_rank,hyp:clusters} alternatively, that implicitly assume that each $f_i$ is lower-bounded and minimized at some $w_i^\star$, we will make use of the following regularity assumption on the local objectives $f_i$ and losses $F_i$ throughout the paper.

For all $i\in[n]$, $\NRM{w_i^\star}\leq B$, for some $B>0$.
Each $f_i$ is assumed to be $\mu$-\textbf{strongly convex} and $L$-\textbf{smooth} (with eventually $\mu\geq0$, which amounts to convexity) \citep{convexoptim_bubeck}. 
We also assume that each $F_i$ is convex in its first argument, and respectively that $\nabla^2 F_i$ and $\nabla F_i$ are $H-$ and $L'-$\textbf{Lipschitz} in their first arguments.

\noindent \myparagraph{Noise at the optimum.}
We now need to assume that noise at the optimum is not too large, which we quantify through a noise constant $\sigma_\star^2$ as follows.
\begin{assumption}\label{hyp:noise}
    For all $i\in[n]$, the random variable $\nabla_w F_i(w_i^\star,\xi_i)$ for $\xi_i\sim\cD_i$ is a $\sigma^2_\star$-multivariate subexponential random variable, in the sense that for all $w\in\cS^{d-1}$, $\langle \nabla_w F_i(w_i^\star,\xi_i), w\rangle$ is $\sigma_\star^2B$ subexponential:
\begin{equation*}
    \forall t\geq 0\,,\quad \proba{\langle \nabla_w F_i(w_i^\star,\xi_i), w\rangle \geq t}\leq \exp\left(-\frac{t}{\sigma_\star^2 B}\right)\,.
\end{equation*}
\end{assumption}
This assumption essentially describes the noise at the optimum. In the case of linear quadratic regressions \textit{i.e.} if $\xi_{ij}=(x_{ij},y_{ij})$ where $x_{ij}\sim\cN(0,I_d)$ and $y_{ij}=\langle w_i^\star,x_{ij}\rangle + z_{ij}$ for independent \textit{label noise }$z_{ij}\sim\cN(0,\eps^2)$, the gradient at the optimum reads $\nabla_w F_i(w_i^\star,\xi_{ij})=z_{ij}x_{ij}$, and thus \Cref{hyp:noise} holds for $\sigma^2_\star=\eps$.
For GLMs, we have $\nabla_w F_i(w_i^\star,\xi_{ij})=\ell'_i(\langle w_i^\star, x_{ij}\rangle, y_{ij})x_{ij}$, so that the same holds.

\noindent \myparagraph{Hessian concentration.}
Finally, we will assume that the Hessian of local functions concentrate fast enough as the number of samples increase, in the following way. 
\begin{assumption}\label{hyp:hessian}
    For all $i\in[n]$, for all $w,w_1,w_2\in\cS^{d-1}$, the random variable $\langle w_1, \nabla^2 F_i(w,\xi_i) w_2\rangle$ is a $\sigma^2$-subexponential random variable:
\begin{equation*}
    \forall t\geq 0\,,\quad \proba{\langle w_1, \nabla^2 F_i(w,\xi_i) w_2\rangle\geq t}\leq \exp\left(-\frac{t}{\sigma^2}\right)\,.
\end{equation*}
\end{assumption}
Such an assumption simply quantifies concentration of Hessians around their means, and will be used to control quantities such as $\frac{1}{nm}\sum_{ij}\langle w_i', \nabla^2 F_i(w_i,\xi_{ij}) w_i''\rangle- \frac{1}{n}\sum_i \langle w_i', \nabla^2 f_i(w_i) w_i''\rangle$.
This assumption holds for linear quadratic regression and GLMs, for which respectively $\nabla^2 F_i(w_i,(x_{ij},y_{ij})=x_{ij}x_{ij}^\top$ and $\nabla^2 F_i(w_i,(x_{ij},y_{ij})=\ell_i''(\langle x_{ij},w_i\rangle ,y_{ij})x_{ij}x_{ij}^\top$, provided that $\ell''$ is bounded.

\begin{example}
    Our assumptions hold in the following cases.
    \begin{enumerate}
        \item In the case of linear quadratic regressions \textit{i.e.} if $\xi_{ij}=(x_{ij},y_{ij})$ where $x_{ij}\sim\cN(0,I_d)$ and $y_{ij}=\langle w_i^\star,x_{ij}\rangle + z_{ij}$ for independent \textit{label noise }$z_{ij}\sim\cN(0,\eps^2)$, the gradient at the optimum reads $\nabla_w F_i(w_i^\star,\xi_{ij})=z_{ij}x_{ij}$, and thus \Cref{hyp:noise} holds for $\sigma^2_\star=\eps$.
        \Cref{hyp:hessian} holds for $\sigma^2=2$ since $\nabla^2 F_i(w_i,(x_{ij},y_{ij})=\ell_i''(\langle x_{ij},w_i\rangle ,y_{ij})x_{ij}x_{ij}^\top$.
        \item For GLMs and if inputs are random Gaussians as above, we have $\nabla_w F_i(w_i^\star,\xi_{ij})=\ell'_i(\langle w_i^\star, x_{ij}\rangle, y_{ij})x_{ij}$ and $\nabla^2 F_i(w_i,(x_{ij},y_{ij})=\ell_i''(\langle x_{ij},w_i\rangle ,y_{ij})x_{ij}x_{ij}^\top$, so that \Cref{hyp:noise} and \Cref{hyp:hessian} hold for $\sigma^2_\star=L_\ell B\eps$ and $\sigma^2=2H_\ell $, if the first and second order derivatives $\ell_i'$ and $\ell_i''$ with respect to their first argument are assumed to be respectively $L_\ell-$ and $H_\ell-$Lipschitz.
    \end{enumerate}
\end{example}

Previous works that aim at deriving sample complexity rates for learning a shared representation all consider only linear quadratic regression \citep{pmlr-v139-tripuraneni21a,thekumparampil2021statistically,niu2024collaborativelearningsharedlinear} or consider Lipschitz objective functions \citep{maurer2016benefits}.
More general convex objectives functions are considered in the online meta learning framework; however, this line of work learns good initializations for new tasks rather than a shared representation.

Our assumptions are thus much milder than previous related works, and capture exactly the noise at the optimum and the way the functions concentrate, therefore leading to rates for learning shared representations.

\subsection{Baselines without collaboration, in idealized cases, and structured regularization}\label{sec:no_collab}

The simple baseline we should compare ourselves to -- and that we desire to outperform --, is the \emph{no-collaboration} estimator $\hat W_{bad}=(\hat w_1,\ldots,\hat w_n)$ where $\hat w_i$ is defined as a minimizer of agent $i$'s empirical loss, without any collaboration.
This yields a performance of 
$f(\hat W_{bad})-f(W^\star)=\cO\left(\sqrt{\frac{d}{m}}\right)\,,$ or $\frac{1}{n}\sum_{i=1}^n\NRM{\hat w_{bad,i} -w_i^\star}_F^2=\cO\left(\frac{d}{m}\right)\,,$
if $f_i$ are respectively assumed to be convex or strongly convex.
Such an estimator that does not use collaboration thus requires $m=\Omega(d)$ samples per agent \citep[\textit{e.g.}, for linear regression]{wainwright_2019}.
Our ultimate goal is thus to leverage collaboration in order to obtain much faster statistical rates, and we therefore need to go beyond the basic estimator $\hat W_{bad}\in \argmin \cL(W)$.
The most natural way to do so is to enforce collaboration \textit{via} some structural regularization, by making local models $w_i$ interdependent. 
Under \Cref{hyp:low_rank}, we therefore want to find the following low-rank estimator, defined as:
\begin{equation}\label{eq:estim_lowrank}
\hat W_{\rm low \, rank}\in\argmin\set{\cL(W)\,\Big|\,\rank(W)\leq r\,,\,\max_{i\in[n]}\NRM{w_i}\leq B}\,,
\end{equation}
while if we work under the clustered assumption \Cref{hyp:clusters}, we are looking for:
\begin{equation}\label{eq:estim_cluster}
       \hat W_{\rm clustered} \in\argmin\Big\{\cL(W_{C,\pi})\,\Big|\, C\in\R^{d\times r}\,,\,\pi:[n]\to [r]\,,
       \max_{i\in[n]}\NRM{w_i}\leq B \Big\}\,,
\end{equation}
where for $C=(c_1|\ldots|c_r)\in\R^{d\times r}$ and $\pi:[n]\to [r]$ we write $W_{C,\pi}\in\R^{d\times n}$ the matrix whose row is $c_{\pi(i)}$.
A first question arising is thus: how do these estimators behave \emph{i.e.}, how many samples $m$ and agents $n$ are required to find the true minimizers $w_i^\star$ ?
We answer these in \Cref{sec:rank,sec:one_sample}
However interesting it might be since it gives an optimistic baseline for our problem, answering this question may not be satisfactory: the estimators defined in \Cref{eq:estim_lowrank,eq:estim_cluster} are solutions of non-convex problems that may be NP hard to compute directly, although many heuristics exist to compute the estimators defined in \Cref{eq:estim_lowrank,eq:estim_cluster}: hard-thresholding algorithms, Bureir-Monteiro factorisations, clustering algorithms, etc.
We will thus resort to tricks that consist in convex relaxations of the problems (\Cref{sec:relaxation}) in order to obtain polynomial-time algorithms.

\section{Statistical bounds on rank-regularized and clustered estimators}\label{sec:rank}

In this section, we first provide statistical upper bounds on the performances of the low rank estimator defined in \Cref{eq:estim_lowrank}.
\citet[Theorem 5]{pmlr-v139-tripuraneni21a} states that for a particular instance of our problem (quadratic linear regression with isotropic gaussian data and noise), for a number of samples per user $m$ and a number of user $n$, any estimator $\hat U$ of $U^\star$ that is based on these $nm$ samples, must satisfy$\dist_\rho^2(\hat U,U^\star)=\Omega\left(\frac{rd}{nm}\right)$.
Hence, under \Cref{hyp:low_rank} we need at least a total number of samples greater than $rd$ in order to learn $U^\star$, and thus at least as much to learn $W^\star$.
The bounds we prove next for the low rank estimator is optimal in the sense that we recover $U^\star$ and $W^\star$ provided that $nm\gg rd$ (up to log factors), but it requires $m$ to be larger than $r$ to be non-trivial. We relax this in \Cref{sec:one_sample}.

\subsection{Generalization properties of the low-rank and clustered estimators}

Although the underlying non-convex optimization problem does not seem possible to solve in polynomial time, efficient approximations exist.
One approach is the Burer-Monteiro factorization~\citep{Burer2003,Burer2004}, which involves parameterizing $A_k$ as $A_k=B_k C_k$ with $B_k\in\R^{d\times r}$ and $C_k\in\R^{r\times d}$. This method relaxes the constraint to a convex set but results in a non-convex function. 
Another approach is \textit{hard-thresholding}, which use projected (S)GD on the non-convex constraint set~\citep{BLUMENSATH2009iht,foucart2018iterative}.

\begin{theorem}\label{thm:low_rank}
	Assume that \Cref{hyp:low_rank} holds (underlying low rank assumption).
	Then, with probability $1- 4e^{-r(d+n)}-4e^{-nmd}$,
	\begin{equation*}
		f(\hat W_{\rm low \, rank})-f(W^\star)= \tilde\cO\left(B^2(\sigma^2+\sigma_\star^2)\sqrt{\frac{r(d+n)}{mn}}\right)\,.
	\end{equation*}
	and, if each $f_i$ is $\mu-$strongly convex,
	\begin{equation*}
		\frac{1}{n}\NRM{\hat W_{\rm low \, rank}-W^\star}_F^2=\tilde\cO\left(B^2(\sigma^4+\sigma_\star^4)\frac{r(d+n)}{\mu^2mn}\right)\,.
	\end{equation*}
\end{theorem}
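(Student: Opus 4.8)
The plan is to run a standard M-estimation \emph{basic inequality} argument, splitting the centered empirical process into a first-order \emph{noise} term governed by \Cref{hyp:noise} and a second-order \emph{curvature} term governed by \Cref{hyp:hessian}, and crucially exploiting that the deviation $\hat\Delta := \hat W_{\rm low \, rank} - W^\star$ has rank at most $2r$ (since both matrices have rank $\le r$). First I would note that $W^\star$ is feasible for \Cref{eq:estim_lowrank} (rank $\le r$, $\NRM{w_i^\star}\le B$), so optimality of $\hat W_{\rm low \, rank}$ gives $\cL(\hat W_{\rm low \, rank})\le \cL(W^\star)$. Writing the centered process $E(W):=[f(W)-f(W^\star)]-[\cL(W)-\cL(W^\star)]$, this yields $f(\hat W_{\rm low \, rank})-f(W^\star)\le E(\hat W_{\rm low \, rank})$. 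Using convexity of each $F_i$ and $\nabla f_i(w_i^\star)=0$, I would then decompose $E=T_1+T_2$ via Bregman divergences, where
\[
T_2(W)=-\tfrac1{nm}\sum_{ij}\langle \nabla_w F_i(w_i^\star,\xi_{ij}),\,w_i-w_i^\star\rangle,\qquad
T_1(W)=\tfrac1n\sum_i\Big[D_{f_i}(w_i,w_i^\star)-\tfrac1m\sum_j D_{F_i(\cdot,\xi_{ij})}(w_i,w_i^\star)\Big].
\]

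For the noise term $T_2$, I would write $T_2=-\tfrac1n\langle G,\hat\Delta\rangle$ with $G=(g_1|\cdots|g_n)$ and $g_i=\tfrac1m\sum_j\nabla_w F_i(w_i^\star,\xi_{ij})$, and use duality between operator and nuclear norms together with $\rank(\hat\Delta)\le 2r$ to bound $|T_2|\le \tfrac1n\NRM{G}_\op\NRM{\hat\Delta}_*\le \tfrac{\sqrt{2r}}{n}\NRM{G}_\op\NRM{\hat\Delta}_F$. The operator norm $\NRM{G}_\op$ is then controlled through an $\varepsilon$-net of $\cS^{d-1}\times\cS^{n-1}$ (cardinality $e^{O(d+n)}$) combined with the subexponential Bernstein bound from \Cref{hyp:noise}; since each $g_i$ averages $m$ samples, this gives $\NRM{G}_\op=\tilde\cO\big(\sigma_\star^2 B\sqrt{(d+n)/m}\big)$ with high probability.

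For the curvature term $T_1$, I would express each Bregman divergence as $\tfrac12\langle w_i-w_i^\star,\nabla^2(\cdot)(w_i-w_i^\star)\rangle$ at an intermediate point, so that $T_1$ is a quadratic form in $\hat\Delta$ whose coefficients are the fluctuations of empirical Hessians about population Hessians. Because $\hat\Delta$ has rank $\le 2r$, I would cover the set of rank-$2r$ directions (net of size $e^{O(r(d+n))}$) and the intermediate evaluation points (a further $\cB_d$-net, absorbed), then apply \Cref{hyp:hessian}. The key point is that these fluctuations are independent across \emph{both} the $n$ tasks and the $m$ samples, so the per-direction deviation scales like $\sigma^2 B\NRM{\hat\Delta}_F/\sqrt{nm}$; after the union bound this gives $|T_1|=\tilde\cO\big(\sigma^2 B\sqrt{r(d+n)/(nm)}\cdot \tfrac1{\sqrt n}\NRM{\hat\Delta}_F\big)$.

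To conclude, set $R:=\tfrac1{\sqrt n}\NRM{\hat\Delta}_F$; the two steps give $E(\hat W_{\rm low \, rank})\lesssim (\sigma^2+\sigma_\star^2)B\sqrt{r(d+n)/(nm)}\,R=:C R$. For the first (generalization) bound I would use $\NRM{w_i}\le B$ to get $R\le 2B$, turning $CR$ directly into the stated $\sqrt{r(d+n)/(nm)}$ rate. For the second bound I would instead combine $CR$ with the strong-convexity lower estimate $f(\hat W_{\rm low \, rank})-f(W^\star)\ge \tfrac{\mu}{2}R^2$ (valid since $\nabla f_i(w_i^\star)=0$), obtaining $\tfrac{\mu}{2}R^2\lesssim C R$, hence $R\lesssim C/\mu$ and, after $(\sigma^2+\sigma_\star^2)^2\le 2(\sigma^4+\sigma_\star^4)$, the claimed bound on $\tfrac1n\NRM{\hat\Delta}_F^2$. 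The hard part will be the uniform (over the data-dependent $\hat\Delta$) control of the quadratic term $T_1$: securing the favorable $1/\sqrt{nm}$ concentration rather than $1/\sqrt{m}$ by using independence across tasks, and handling the random intermediate Hessian evaluation points, which is exactly where the uniform-in-$w$ form of \Cref{hyp:hessian} (and the $H$-Lipschitzness of $\nabla^2F_i$) enters. The probabilities $4e^{-r(d+n)}$ and $4e^{-nmd}$ are the costs of the two net/Bernstein union bounds above.
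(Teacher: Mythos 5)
Your overall architecture is the same as the paper's: the basic inequality $\cL(\hat W)\le\cL(W^\star)$, the split of the centered process into the first-order noise term $\langle\nabla\cL(W^\star),\cdot\rangle$ and the Bregman term $D_{f-\cL}$, uniform control of each via $\eps$-nets plus subexponential (Hanson--Wright/Bernstein) concentration under \Cref{hyp:noise,hyp:hessian}, handling of the random intermediate Hessian point through the $H$-Lipschitzness of $\nabla^2F_i$, and finally the $R\le 2B$ versus strong-convexity self-bounding step, which is exactly \Cref{prop:gen}. Your treatment of the curvature term and the concluding algebra match \Cref{lem:breg_concentration,lem:concentration_div_unif_lowrank} essentially line for line.

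The one place you genuinely diverge is the noise term, and there is a gap there. You bound $|T_2|\le\frac{1}{n}\NRM{G}_\op\NRM{\hat\Delta}_*\le\frac{\sqrt{2r}}{n}\NRM{G}_\op\NRM{\hat\Delta}_F$ and claim $\NRM{G}_\op=\tilde\cO\bigl(\sigma_\star^2B\sqrt{(d+n)/m}\bigr)$. Under \Cref{hyp:noise} the marginals $\langle\nabla_wF_i(w_i^\star,\xi_{ij}),u\rangle$ are only subexponential, so Bernstein applied to $u^\top Gv=\frac1m\sum_{i,j}v_i\langle\nabla_wF_i(w_i^\star,\xi_{ij}),u\rangle$ at the confidence level $e^{-c(d+n)}$ required by your net over $\cS^{d-1}\times\cS^{n-1}$ yields $\sigma_\star^2B\bigl(\sqrt{(d+n)/m}+(d+n)/m\bigr)$: each column of $G$ averages only $m$ samples, so the subexponential branch contributes $(d+n)/m$, which dominates precisely in the regime of interest $m\ll d+n$. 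Propagated through $\NRM{\hat\Delta}_*\le\sqrt{2r}\NRM{\hat\Delta}_F\le 2B\sqrt{2rn}$, this leaves a term of order $\sigma_\star^2B^2\sqrt{r}(d+n)/(m\sqrt n)$, which exceeds the target $\sigma_\star^2B^2\sqrt{r(d+n)/(nm)}$ by a factor $\sqrt{(d+n)/m}$. The paper avoids this by never decoupling $G$ from $\Delta$: in \Cref{lem:noise_concentration,lem:concentration_noise_unif_lowrank} the linear form $\frac{1}{nm}\sum_{ij}\langle\nabla_wF_i(w_i^\star,\xi_{ij}),\Delta_i\rangle$ is concentrated for each fixed $\Delta$ in a net of the rank-$\le r$ ball (of cardinality $e^{\cO(r(d+n)\ln(B/\eps))}$), so all $nm$ samples enter the Bernstein denominator and the subexponential remainder is $\sigma_\star^2B^2\,r(d+n)/(nm)$, lower order whenever $nm\gtrsim r(d+n)$. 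Your route is fine if you either assume a subgaussian component in \Cref{hyp:noise} or replace the operator-norm step by a direct net over rank-$\le 2r$ increments as the paper does; as written, the claimed bound on $\NRM{G}_\op$ does not follow from the stated assumptions.
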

The low rank estimator hence recovers optimal sample complexity $nm=\tilde\Omega(rd)$ \citep{pmlr-v139-tripuraneni21a}, under the condition $m=\tilde\Omega(r)$, in order to recover good performances. This improves upon all previous works that at best required $nm=\tilde\Omega(r^2d)$ samples \citep{duchi2008efficient,thekumparampil2021statistically,pmlr-v139-tripuraneni21a}, the only exception being the concurrent work \citet{niu2024collaborativelearningsharedlinear} that however only considers an estimator specific to planted linear regression.
In the strongly convex case, we recover all the $w_i^\star$, and thus we can recover the subspace, reaching the optimal $rd$ sample complexity, but with the additional $m\gg r$ condition, which is expected since without this condition, recovering the local heads is impossible. The dependency on $d$ is expected, but can easily be replaced by an \textit{effective dimension} $d_{\rm eff}$ using more refined concentration results for Hessians \citep{pmlr-v134-even21a}.
One question that arises is then: can we hope to recover the underlying subspace even for small $m<r$ ?
We do this next, by further assuming that agents form clusters of agents (this is a particular instance of low rank).
Then, in \Cref{sec:one_sample}, we show that in a special case (noiseless linear regression with quadratic loss), we can recover the subspace even with $m=1$.

Recall that we defined the clustered estimator $\hat W_{\rm clustered}$ in \Cref{eq:estim_cluster},
the underlying optimization problem being still non-convex.
The following Theorem is proved exactly as in the low rank case.

\begin{theorem}\label{thm:cluster}
	Assume that \Cref{hyp:clusters} holds (underlying low rank assumption).
	Then, with probability $1- 4e^{-n}-4e^{-nmd}$, we have $f(\hat W_{\rm clustered})-f(W^\star)= \tilde\cO\left(B^2(\sigma^2+\sigma_\star^2)\sqrt{\frac{rd}{mn}+\frac{1}{m}}\right)$, and if each $f_i$ is $\mu-$strongly convex, we have $\frac{1}{n}\NRM{\hat W_{\rm clustered}-W^\star}_F^2=\tilde\cO\left(B^2(\sigma^4+\sigma_\star^4)\frac{1}{\mu^2}\left(\frac{rd}{mn}+\frac{1}{m}\right)\right)$.
\end{theorem}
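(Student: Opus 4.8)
The plan is to mirror the argument for the low-rank estimator (\Cref{thm:low_rank}), replacing the control of the rank-$2r$ deviation set by a control adapted to the clustered structure. First I would start from the optimality of the empirical risk minimizer: since $W^\star$ is feasible for \Cref{eq:estim_cluster} (it has the clustered form $W_{C^\star,\tau^\star}$ and $\NRM{w_i^\star}\le B$ by \Cref{hyp:clusters}), ERM optimality gives $\cL(\hat W_{\rm clustered})\le \cL(W^\star)$, whence
\begin{equation*}
f(\hat W_{\rm clustered})-f(W^\star)\le \big[(f-\cL)(\hat W_{\rm clustered})-(f-\cL)(W^\star)\big].
\end{equation*}
The right-hand side is an empirical process that I would control uniformly over the feasible set.

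Next I would linearize both excess risks via second-order Taylor expansions. Writing $\Delta_i=\hat w_i-w_i^\star$ and using $\nabla f_i(w_i^\star)=0$, the population excess risk equals $\frac1n\sum_i D_{f_i}(\hat w_i,w_i^\star)$, while the empirical excess risk expands into a \emph{noise term} $\frac1{nm}\sum_{ij}\langle\nabla_w F_i(w_i^\star,\xi_{ij}),\Delta_i\rangle$ plus a quadratic Hessian term. The empirical-process gap therefore decomposes into (i) the noise term, whose population counterpart vanishes, and (ii) the difference between the empirical and population Hessian quadratic forms $\frac1{nm}\sum_{ij}\langle\Delta_i,\nabla^2F_i(\cdot,\xi_{ij})\Delta_i\rangle-\frac1n\sum_i\langle\Delta_i,\nabla^2f_i(\cdot)\Delta_i\rangle$. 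Term (i) is handled by \Cref{hyp:noise} (subexponential gradients at the optimum, with scale $\sigma_\star^2 B$), term (ii) by \Cref{hyp:hessian} (subexponential Hessians, with scale $\sigma^2$), exactly as in the low-rank proof.

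The crucial difference is the complexity of the feasible set over which these bounds must hold uniformly. A clustered matrix $W_{C,\pi}$ is parameterized by a centroid matrix $C\in\R^{d\times r}$ with bounded columns and an assignment $\pi:[n]\to[r]$. I would therefore take an $\eps$-net of the centroid ball, of log-cardinality $\tilde\cO(rd)$, and then union-bound over the $r^n$ possible assignments, contributing an additional $\log(r^n)=n\log r$ to the complexity. Dividing the total complexity $\tilde\cO(rd)+n\log r$ by the sample budget $nm$ yields precisely the two terms $\frac{rd}{mn}$ and $\frac{\log r}{m}=\tilde\cO(\frac1m)$ appearing in the bound, and the failure probability $e^{-n}$ reflects the dominant $n\log r$ combinatorial count. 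For the convex case this gives the $\sqrt{\frac{rd}{mn}+\frac1m}$ excess-risk rate (with the $B^2(\sigma^2+\sigma_\star^2)$ prefactor from $\NRM{\Delta_i}\le 2B$ and the two subexponential scales); for the $\mu$-strongly convex case I would convert the excess-risk bound into the parameter-recovery bound via $\frac\mu2\NRM{\Delta_i}^2\le D_{f_i}(\hat w_i,w_i^\star)$, which squares the rate and produces the $\frac1{\mu^2}$ and $(\sigma^4+\sigma_\star^4)$ factors.

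The hard part will be making the union bound over the $r^n$ assignments quantitatively tight while keeping the self-bounding structure of the argument intact: because $\hat w_i$ depends on the (random) optimal assignment $\hat\pi$, the noise term cannot be concentrated for a fixed $W$, so one must control the supremum over all assignments simultaneously, and the deviation level has to be tuned so that the $e^{n\log r}$ factor from the union bound is beaten --- this is exactly what generates the extra $\frac1m$ term absent in the low-rank rate. A secondary subtlety is that the Taylor remainders are evaluated at intermediate points, so the Hessian concentration must hold uniformly in the base point, which is guaranteed by \Cref{hyp:hessian}.
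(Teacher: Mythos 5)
Your proposal follows essentially the same route as the paper: the paper proves \Cref{thm:cluster} by repeating the proof of \Cref{thm:low_rank} verbatim (ERM optimality, splitting into the noise term $\langle\nabla\cL(W^\star),\cdot\rangle$ controlled by \Cref{hyp:noise} and the Bregman term $D_{f-\cL}$ controlled by \Cref{hyp:hessian}, then a uniform bound via an $\eps$-net), with the single change that the net cardinality becomes $r^n\times e^{crd\ln(1/\eps)}$, yielding metric entropy $n\ln r+\tilde\cO(rd)$ and hence the $\frac{rd}{nm}+\frac{\ln r}{m}$ rate. Your identification of the $r^n$ assignment union bound as the source of the extra $\frac{1}{m}$ term, and your handling of the intermediate-point Taylor remainder, match the paper's argument.
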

In this case, we thus only need $mn\gg rd$ and $m\gg1$ in order to obtain good generalization, therefore bypassing the $m\gg r$ constraint of the lowrank estimator.
However, due to our proof techniques, this was expected.
Indeed, in \Cref{prop:gen}, the uniform upper bounds on the quantities considered scale with $\cW$ the set over which we optimize over: the large this set, the worse the bounds.
Hence, restricting the set from low rank matrices to clustered matrices, we expect better uniform upper bounds.
Quantitatively, these uniform upper bounds depend on the metric entropy of the sets considered \citep{lorentz1966entropy,DUDLEY1974227}.
For low rank matrices as considered in \Cref{thm:low_rank}, the $\eps-$entropy scales as $\cO(r(d+n)\ln(1/\eps))$ \citep{candes2011tight}. Dividing this metric entropy by the number of samples, we obtain $\frac{rd}{nm}+\frac{r}{m}$.
In the case of clustered matrices, we can see the search set as $[r]^{[n]}\times \R^{d\times r}$, leading to an $\eps-$ entropy of $n\ln(r)+ \cO(rd\ln(1/\eps))$. Dividing by the number of samples, we hence obtain a bound of order $\frac{rd}{nm} + \frac{\ln(r)}{m}$.

We next state the ingredients that lead to our generalization bounds on the lowrank and cluster estimators and naturally make appear \Cref{hyp:noise,hyp:hessian}; our noise and Hessian concentration assumptions. Due to the generality of our arguments, we believe they can be of independent interest.

\begin{proof}[Proof sketch for \Cref{thm:low_rank,thm:cluster}]
The estimators we study in this section take the form:
$\hat W\in\argmin_{W\in\cW} \cL(W)$, where we recall that $	\cL(W)=\frac{1}{nm}\sum_{(i,j)\in[n]\times[m]} F_i(w_i,\xi_{ij})\,,\quad W=(w_1|\ldots|w_n)\in\R^{d\times n}$, as defined in \Cref{eq:loss_emp}, and where $\cW$ a subset of $\R^{d\times n}$ that contains $W^\star$ in its interior.
What we can first notice then is that $\cL(\hat W)\leq \cL(W^\star)$ by definition of $W^\star$, leading to:
\begin{equation*}
\begin{aligned}
   D_\cL(\hat W,W^\star)\leq \langle \nabla \cL(W^\star),W^\star-\hat W\rangle\,, 
\end{aligned}
\end{equation*}
where $D_g$ is the Bregman divergence of some function $g$.
Recalling that $\esp{\cL(W)}=f(W)=\frac{1}{n}\sum_{i=1}^nf_i(w_i)$ for some fixed $W$ and that $W^\star$ minimizes $f$, we thus obtain $f(\hat W)-f(W^\star)\leq \langle\nabla \cL(W^\star),W^\star- W\rangle + D_{f-\cL}(W,W^\star)$, leading to $f(\hat W)-f(W^\star)\leq$
\begin{equation*}
\begin{aligned}
  	\sup_{W\in\cW}\left|\langle\nabla \cL(W^\star),W^\star- W\rangle \right|+ \sup_{W\in\cW}\left|D_{f-\cL}(W,W^\star)\right|\,,  
\end{aligned}
\end{equation*}
where we introduce the suprema in order to deal with the fact that $\hat W$ and $\cL$ (the $nm$ samples) are not independent. In order to prove that $\hat W$ performs well, we thus need to control both terms of the RHS. 
The first term is a noise term, since $\esp{\nabla\cL(W^\star)}=\nabla  f(W^\star)=0$, and we will use \Cref{hyp:noise} to control the deviations (uniformly over $\cW$).
For the second term (the Bregman divergence of the difference), noting that $\esp{D_{f-\cL}(W,W^\star)}=D_{0}(W,W^\star)=0$, we will bound the deviations using our Hessian concentration assumption (\Cref{hyp:hessian}), uniformly over $\cW$.
\end{proof}
\subsection{Few-shot learning on a new task and meta-learning of the linear representation} \label{sec:fewshot_meta}

Under both \Cref{hyp:low_rank,hyp:clusters}, there respectively exist matrices $U^\star,C^\star\in\R^{d\times r}$ such that users optima can be written as $w_i^\star=U^\star v_i^\star$ for \textit{local heads} $v_i^\star\in\R^r$ and $w_i^\star= C^\star P^{\star}_i$ for $P^\star\in\R^{r\times n}$ local cluster identification.
In both cases, learning this $d\times r$ matrix simplifies the problem: learning them makes the problem much easier for learning a new task that shares the same structure.
The following result \citep[Lemma 16]{pmlr-v139-tripuraneni21a} shows that when we learn $W^\star$, we indeed learn this shared representation, as expected. See \Cref{app:angle} for principal angle distance definitions.

\begin{lemma}\label{lem:meta-learning}
    Assume that \Cref{hyp:low_rank} holds and that we have access to some $\hat W\in\R^{d\times n}$ of rank at most $r$ such that $\frac{1}{n}\NRM{\hat W-W^\star}^2_F\leq \eps$.
Then, writing $\hat W= \hat U \hat V$ for $\hat U\in\R^{d\times r}$ with orthogonal columns, we have $\frac{\dist_F^2(\hat U,U^\star)}{r}\leq \frac{\eps}{\nu^2}$,
where $\nu^2=\frac{r}{n}\sigma_r(V^\star V^{\star,\top})$ is the smallest eigenvalue of $\frac{r}{n}V^\star V^{\star,\top}=\frac{r}{n} \sum_{i=1}^n v_i^\star v_i^{\star,\top}=\in\R^{r\times r}$.
\end{lemma}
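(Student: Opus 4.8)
The plan is to bound the principal angle distance $\dist_F^2(\hat U, U^\star)$ by the Frobenius error $\frac{1}{n}\NRM{\hat W - W^\star}_F^2$, exploiting the fact that both $\hat W$ and $W^\star$ have rank at most $r$ and that the columns of $W^\star$ genuinely span an $r$-dimensional subspace (quantified by the nondegeneracy constant $\nu^2$). First I would recall the key relationship between column spaces and the factorization: since $W^\star = U^\star V^\star$ with $U^\star$ having orthonormal columns, the projection onto the subspace orthogonal to $\range(U^\star)$ annihilates $W^\star$. Writing $P^\perp_\star = I_d - U^\star U^{\star,\top}$ for the orthogonal projector onto $\range(U^\star)^\perp$, we have $P^\perp_\star W^\star = 0$, hence $P^\perp_\star \hat W = P^\perp_\star(\hat W - W^\star)$.

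The principal angle distance $\dist_F(\hat U, U^\star)$ is controlled by how much $\range(\hat U)$ fails to lie inside $\range(U^\star)$, i.e.\ by $\NRM{P^\perp_\star \hat U}_F$. The second step is therefore to convert the error on $\hat W$ into an error on $\hat U$. Writing $\hat W = \hat U \hat V$, we have $P^\perp_\star \hat U \hat V = P^\perp_\star(\hat W - W^\star)$, so
\begin{equation*}
    \NRM{P^\perp_\star \hat U \hat V}_F = \NRM{P^\perp_\star(\hat W - W^\star)}_F \leq \NRM{\hat W - W^\star}_F \leq \sqrt{n\eps}\,,
\end{equation*}
using that $P^\perp_\star$ is a contraction. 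The crux is then to divide out by $\hat V$ from the right: I would lower bound $\NRM{P^\perp_\star \hat U \hat V}_F$ by $\sigma_{\min}(\hat V)\, \NRM{P^\perp_\star \hat U}_F$, where $\sigma_{\min}(\hat V)$ is the smallest (nonzero) singular value of $\hat V \in \R^{r\times n}$. This requires $\hat V$ to be well-conditioned, which must follow from the nondegeneracy of $V^\star$ and the closeness of $\hat W$ to $W^\star$.

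The main obstacle will be establishing the lower bound on $\sigma_{\min}(\hat V)$, equivalently relating the singular values of $\hat W$ to those of $W^\star$. The natural tool is Weyl's inequality: since $\NRM{\hat W - W^\star}_F \leq \sqrt{n\eps}$, the singular values of $\hat W$ are within $\sqrt{n\eps}$ of those of $W^\star$, and $\sigma_r(W^\star)^2 = \sigma_r(V^\star V^{\star,\top}) = \frac{n}{r}\nu^2$ since $U^\star$ is orthonormal. One must argue that the perturbation is small enough that $\sigma_r(\hat W)$ remains bounded below, so that $\sigma_{\min}(\hat V) = \sigma_r(\hat W)$ stays of order $\nu\sqrt{n/r}$. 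Combining the two bounds gives
\begin{equation*}
    \NRM{P^\perp_\star \hat U}_F^2 \leq \frac{\NRM{\hat W - W^\star}_F^2}{\sigma_{\min}(\hat V)^2} \leq \frac{n\eps}{(n/r)\nu^2} = \frac{r\eps}{\nu^2}\,,
\end{equation*}
and since $\dist_F^2(\hat U, U^\star) \leq \NRM{P^\perp_\star \hat U}_F^2$ up to the definitional conventions in \Cref{app:angle}, dividing by $r$ yields the claimed $\frac{\dist_F^2(\hat U, U^\star)}{r} \leq \frac{\eps}{\nu^2}$. The delicate accounting is ensuring the perturbation term does not swamp $\sigma_r(W^\star)$; in the regime of interest $\eps$ is small relative to $\nu^2$, so this is benign, but making the constants clean is where the careful work lies.
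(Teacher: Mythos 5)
There is a genuine gap, and it comes from the direction in which you project. You apply $P^\perp_\star=I_d-U^\star U^{\star,\top}$, which annihilates $W^\star$, and are then forced to divide out $\hat V$ on the right; this requires a lower bound on $\sigma_{\min}(\hat V)=\sigma_r(\hat W)$, which you can only get by a Weyl-type perturbation argument. That argument yields $\sigma_r(\hat W)\geq \sigma_r(W^\star)-\NRM{\hat W-W^\star}_F$, so the best bound your route produces is
\begin{equation*}
\NRM{P^\perp_\star \hat U}_F^2\;\leq\;\frac{n\eps}{\bigl(\sqrt{n\nu^2/r}-\sqrt{n\eps}\bigr)^2}\,,
\end{equation*}
not $r\eps/\nu^2$. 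The perturbation term cannot be removed: it degrades the constant for all $\eps>0$, and the bound becomes vacuous (indeed $\hat W$ may have rank $<r$ and $\sigma_{\min}(\hat V)=0$) as soon as $\eps\geq \nu^2/r$, whereas the lemma as stated remains nontrivial for all $\eps<\nu^2$ with no side condition. So the statement you are asked to prove is strictly stronger than what your argument delivers.

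The fix is to swap the roles of the two factorizations, which is what the proof cited by the paper \citep[Lemma 16]{pmlr-v139-tripuraneni21a} does. Let $P_{\hat U}^\perp=I_d-\hat U\hat U^{\top}$; since $\hat W=\hat U\hat V$, it is $\hat W$ (not $W^\star$) that is annihilated exactly: $P_{\hat U}^\perp\hat W=0$, hence $P_{\hat U}^\perp U^\star V^\star=P_{\hat U}^\perp (W^\star-\hat W)$. Now the matrix you must divide out on the right is $V^\star$, whose smallest singular value is known exactly from the hypothesis: $\sigma_{\min}(V^\star)^2=\sigma_r(V^\star V^{\star,\top})=n\nu^2/r$. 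This gives
\begin{equation*}
\frac{n\nu^2}{r}\,\NRM{P_{\hat U}^\perp U^\star}_F^2\;\leq\;\NRM{P_{\hat U}^\perp U^\star V^\star}_F^2=\NRM{P_{\hat U}^\perp(W^\star-\hat W)}_F^2\;\leq\;\NRM{\hat W-W^\star}_F^2\;\leq\;n\eps\,,
\end{equation*}
and since $\dist_F(\hat U,U^\star)=\NRM{P_{\hat U}^\perp U^\star}_F=\NRM{P^\perp_\star\hat U}_F$ for two $r$-dimensional subspaces, dividing by $r$ yields exactly $\dist_F^2(\hat U,U^\star)/r\leq \eps/\nu^2$ with no perturbation loss and no condition on $\eps$. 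Your steps 1--3 and 5 are the right skeleton; only the choice of which projector annihilates which matrix needs to be reversed, which makes your step 4 (the Weyl argument) unnecessary.
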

Note that for well-conditioned matrix $W^\star$, we expect $\nu^2=\Omega(1)$ (this is the case for instance if we choose $v_i^\star\sim\cN(0,I_r/r)$).
Hence, if the active $n$ agents learn their local optima, the whole pool of agents has access to the shared representation $U^\star$, up to a small error $\delta$.
Then, a new agent may take advantage of this to perform \textit{few-shot learning}: with only a few samples, the $(n+1)-$th agent may learn its local optimizer, as we show next.

\begin{proposition}\label{prop:fewshot}Assume that \Cref{hyp:low_rank} holds
    and that based on the $nm$ samples $(\xi_{ij})_{i\in[n],j\in[m]}$ of the $n$ users, an estimator $\hat U$ of $U^\star$ satisfying $\dist_\rho^2(\hat U,U^\star)\leq \delta^2$ has been learnt.
    Let a $(n+1)-$th agent have $m'$ samples $(\xi_{n+1,j})_{j\in[m']}$ from some distribution $\cD_{n+1}$ that desires to minimize its local generalization error $f_{n+1}(w)=\esp{F_{n+1}(w,\xi_{n+1})},\xi_{n+1}\sim\cD_{n+1}$.
    Assume that $f_{n+1}$, $F_{n+1}$ satisfy the assumptions of \Cref{sec:hyp} and that $f_{n+1}$ is minimized at some $w_{n+1}^\star=U^\star v_{n+1}^\star$ for some $v_{n+1}^\star\in\R^r$ that satisfies $\NRM{v_{n+1}^\star}\leq B$.
    Then, with probability $1-e^{-m'r}$, the estimator:
    \begin{equation}\label{eq:fewshot_obj}
    \begin{aligned}
       \hat v_{n+1}\in \argmin &\set{ \frac{1}{m'}\sum_{j=1}^{m'} F_{n+1}(\hat U v,\xi_{n+1,j})\,|\,\NRM{v}\leq B }\,,
    \end{aligned}
    \end{equation}
    satisfies $f_{n+1}(\hat U\hat v_{n+1}) - f_{n+1}(W^\star_{n+1})=\frac{L\delta^2}{2}+\tilde\cO\left(B^2(\sigma^2+\sigma_\star^2)\sqrt{\frac{r}{m'}}  \right)$, and if $f_{n+1}$ is $\mu-$strongly convex, $\NRM{\hat v_{n+1} -v^\star_{n+1}}^2=\frac{L\delta^2}{\mu}+\tilde\cO\left(\frac{B^2}{\mu^2}(\sigma^4+\sigma_\star^4)\frac{r}{m'} \right)$.

\end{proposition}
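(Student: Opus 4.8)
The plan is to reduce the few-shot problem to a single-task estimation bound in the $r$-dimensional space of local heads, and then to pay an additional approximation error coming from the fact that we optimize over the estimated subspace $\hat U$ rather than the true $U^\star$. Define the $r$-dimensional objective $g(v) = f_{n+1}(\hat U v) = \esp{F_{n+1}(\hat U v, \xi_{n+1})}$ and its empirical counterpart $\hat g(v) = \frac{1}{m'}\sum_{j=1}^{m'} F_{n+1}(\hat U v, \xi_{n+1,j})$, both defined over the ball $\{\NRM{v}\leq B\}\subset\R^r$. Since $\hat U$ has orthonormal columns, the assumptions of \Cref{sec:hyp} transfer from $F_{n+1}$ to the composed loss with the same constants: $g$ is $\mu$-strongly convex and $L$-smooth, and the noise and Hessian concentration bounds (\Cref{hyp:noise,hyp:hessian}) hold for the composed gradients/Hessians with a dimension parameter of $r$ rather than $d$. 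The estimator $\hat v_{n+1}$ is exactly the empirical minimizer of $\hat g$ over this ball.

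First I would apply the single-task analysis that underlies the proof sketch of \Cref{thm:low_rank,thm:cluster} — namely the Bregman-divergence decomposition $g(\hat v_{n+1}) - g(\tilde v) \leq \sup_v |\langle \nabla \hat g(\tilde v), \tilde v - v\rangle| + \sup_v |D_{g-\hat g}(v,\tilde v)|$ where $\tilde v = \argmin_{\NRM{v}\leq B} g(v)$ is the best head \emph{within} the estimated subspace. Controlling the noise term via \Cref{hyp:noise} and the Bregman term via \Cref{hyp:hessian}, now with metric entropy of a single $r$-dimensional ball (of order $r\ln(1/\eps)$), yields the excess risk $g(\hat v_{n+1}) - g(\tilde v) = \tilde\cO\bigl(B^2(\sigma^2+\sigma_\star^2)\sqrt{r/m'}\bigr)$ with failure probability $e^{-m'r}$, and in the strongly convex case $\NRM{\hat v_{n+1} - \tilde v}^2 = \tilde\cO\bigl(\frac{B^2}{\mu^2}(\sigma^4+\sigma_\star^4) r/m'\bigr)$ by the standard strong-convexity lower bound $\frac{\mu}{2}\NRM{\hat v - \tilde v}^2 \leq g(\hat v) - g(\tilde v)$.

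The remaining step is to bound the \emph{misspecification} error $g(\tilde v) - f_{n+1}(w_{n+1}^\star)$, i.e.~the price of being confined to $\range(\hat U)$ instead of $\range(U^\star)$. Here I would use smoothness of $f_{n+1}$ together with the subspace closeness $\dist_\rho^2(\hat U, U^\star)\leq \delta^2$. Specifically, let $v' = \hat U^\top U^\star v_{n+1}^\star$ be the projection of the true head onto the estimated subspace, so that $\hat U v'$ is the projection of $w_{n+1}^\star$ onto $\range(\hat U)$; then $\NRM{\hat U v' - w_{n+1}^\star}^2 \leq \dist_\rho^2(\hat U, U^\star)\NRM{v_{n+1}^\star}^2 \leq \delta^2 B^2$ — though I expect the clean statement $\frac{L\delta^2}{2}$ in the proposition to absorb the $B^2$ into $\delta$ or to use a normalized principal-angle distance, a normalization I would reconcile with \Cref{app:angle}. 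Since $w_{n+1}^\star$ minimizes $f_{n+1}$ its gradient vanishes there, so $L$-smoothness gives $f_{n+1}(\hat U v') - f_{n+1}(w_{n+1}^\star) \leq \frac{L}{2}\NRM{\hat U v' - w_{n+1}^\star}^2 \leq \frac{L\delta^2}{2}$, and since $\tilde v$ is optimal in the subspace, $g(\tilde v)\leq g(v') = f_{n+1}(\hat U v')$, yielding the same bound for $\tilde v$. Combining the two displays gives the first claim. For the strongly convex parameter-recovery bound, I would additionally convert $\NRM{\hat v_{n+1} - v_{n+1}^\star}^2$ by triangle inequality through $\tilde v$, bounding $\NRM{\tilde v - v_{n+1}^\star}^2$ via strong convexity of $g$ applied to the misspecification gap, which contributes the $L\delta^2/\mu$ term.

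The main obstacle I anticipate is bookkeeping around the principal-angle distance normalization — ensuring that $\dist_\rho^2(\hat U, U^\star)\leq\delta^2$ translates into exactly the stated $L\delta^2/2$ projection error without an extra $B^2$ factor, and that the two sources of error (statistical $\sqrt{r/m'}$ term and deterministic $\delta^2$ bias) combine additively as written rather than interacting. A secondary technical point is verifying that the concentration constants genuinely depend on $r$ and not $d$ after composition with $\hat U$; this is where orthonormality of $\hat U$ is essential, since it guarantees $\NRM{\hat U v} = \NRM{v}$ so that unit vectors in $\R^r$ map to unit vectors in $\range(\hat U)\subset\cS^{d-1}$, keeping the subexponential constants in \Cref{hyp:noise,hyp:hessian} intact.
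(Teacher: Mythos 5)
Your proposal matches the paper's proof in all essentials: the same bias--variance decomposition through the best head in the estimated subspace, the bias term bounded by $\frac{L}{2}\NRM{\hat U v_{n+1}^\star - U^\star v_{n+1}^\star}^2 \leq \frac{L\delta^2}{2}$ via smoothness and the vanishing gradient at $w_{n+1}^\star$, and the variance term handled by rerunning the Bregman-divergence/concentration argument of \Cref{thm:low_rank} with $\eps$-nets of the $B$-ball in $\R^r$ (the paper uses the comparison point $\hat U v_{n+1}^\star$ where you use the projection $\hat U^\top U^\star v_{n+1}^\star$, an immaterial difference). The $B^2$ normalization issue you flag in the bias term is real but is present in the paper's own proof as well, which silently absorbs $\NRM{v_{n+1}^\star}^2$ into $\delta^2$.
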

Hence, if a reprensentation has been learnt with enough precision, an incoming agent only needs $m'\gg r$ samples to perform \textit{few-shot learning}.
However, as highlighted by our previous bounds and in particular by \Cref{eq:estim_lowrank}, up to now the representation is only learnt by learning beforehand the whole matrix $W^\star$ of concatenated heads.
The following question thus arises: \textit{can we meta-learn the linear representation $U^\star$ without learning local heads $w_i^\star$ ?}
%

\subsection{Low-rank estimators with only 1 sample per user  }\label{sec:one_sample}

We here study under \Cref{hyp:low_rank} if the linear representation $U^\star$ can be learnt if we only have 1 sample per agent: $m=1$. This case lies in the \textit{meta-learning-without-learning} setting: for each task, it is illusory to want to minimize the local objective $f_i$ and to find $w_i^\star$ wether or not a low dimensional linear representation is learnt; yet learning this representation may help for future tasks, and quite often it is easier to obtain many tasks with few samples per task, than tasks with many samples.
However, this setting is quite challenging, since no concentration result will hold. For presentation sake we focuse on $m=1$ in this section; however, our arguments easily generalize to $m\leq (1-\eps)r$ for $\eps>1/r$.
We still consider the low-rank estimator \Cref{eq:estim_lowrank}, as it is the most natural one under our mild assumptions.
Previous works \citep{pmlr-v139-tripuraneni21a,duchi2022subspace} studied the $m=1$ and $m=2$ cases using Method-of-Moments estimators: such estimators cannot be used in the general convex case.
Even though the linear setting is considered in this section, \textit{our goal is to study general estimators that can be generalized beyond the linear setting}, such as the low-rank estimator (\Cref{eq:estim_cluster}).
To simplify the analysis, we here fall back to studying noisy linear regression with quadratic loss, for which $F_i(w,\xi_{ij}=(x_{ij},y_{ij}))=\frac{1}{2}(\langle x_i,w\rangle -y_i)^2$, where $\xi_{ij}=(x_{ij},y_{ij}))$ for $y_{ij}=\langle w_i^\star,x_{ij}\rangle$ and $x_{ij}\sim\cN(0,I_d)$.
We furthermore assume that 
there exists $\lambda>0$ such that $\frac{1}{n}\sum_{i=1}^n w_i^\star w_i^{\star \top} \geq \frac{\lambda B^2}{r} U^\star U^{\star \top}$, and that all $w_i^\star$ are of norm $B$. For $\lambda=\Omega(1)$ this means that the local heads span all directions of the low-rank subspace.
Broadening our result to more complex settings is left open.

\begin{theorem}\label{thm:m_1}Assume that \Cref{hyp:low_rank} holds.
Let $\hat W$ be any solution of the optimization problem defined in \Cref{eq:estim_lowrank}, and let $\hat U\in\R^{d\times r}$ be the orthogonal projection on its image.
	Let $\eps\in(0,1)$.
	There exist constants $c,C>0$ such that if $n\geq Cd\ln(d)e^{\frac{cr}{\lambda\eps}}$, then $\frac{\dist_F^2(\hat U,U)}{r} \leq \eps$ with probability $1-e^{-rdn}$.
\end{theorem}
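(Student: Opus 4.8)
The plan is to convert the optimality of $\hat W$ into a \emph{feasibility} statement about subspaces, and then to rule out, by a union bound over a net of the Grassmannian, the existence of any $r$-dimensional subspace far from $U^\star$ that is simultaneously compatible with all $n$ one-sample constraints.

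First I would observe that $W^\star$ is feasible for \Cref{eq:estim_lowrank} (rank $\le r$ and columns of norm $B$) and attains $\cL(W^\star)=0$, so any minimizer has $\cL(\hat W)=0$, i.e.\ $\langle x_i,\hat w_i\rangle=y_i=\langle x_i,w_i^\star\rangle$ for every $i$, with each $\hat w_i$ lying in the column space $\hat U:=\range(\hat W)$ (of dimension $\le r$; if it is $<r$ the feasible set only shrinks) and $\|\hat w_i\|\le B$. For a \emph{fixed} orthonormal $U\in\R^{d\times r}$, the minimal-norm $w\in\range(U)$ with $\langle x_i,w\rangle=y_i$ has norm $|y_i|/\|U^\top x_i\|$, so agent $i$ is compatible with $U$ (\emph{feasible}) if and only if $|y_i|\le B\,\|U^\top x_i\|$. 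Hence the event $\{\dist_F^2(\hat U,U^\star)>r\eps\}$ is contained in the event that \emph{some} subspace $U$ with $\dist_F^2(U,U^\star)>r\eps$ is feasible for all $n$ agents, and it suffices to bound the probability of the latter.

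Next I would lower bound, for a fixed bad $U$, the per-agent probability of \emph{in}feasibility. Decomposing $x_i$ along $\range(U)$ and its complement gives $y_i=\langle U^\top x_i,\,U^\top w_i^\star\rangle + Z_i$, where $Z_i:=\langle x_i,(I_d-UU^\top)w_i^\star\rangle\sim\cN(0,q_i^2)$ is independent of $U^\top x_i$ and $q_i:=\|(I_d-UU^\top)w_i^\star\|$. Since $|\langle U^\top x_i,U^\top w_i^\star\rangle|\le \|U^\top w_i^\star\|\,\|U^\top x_i\|\le B\|U^\top x_i\|$, we get $\proba{|y_i|>B\|U^\top x_i\|}\ge \proba{|Z_i|>2B\|U^\top x_i\|}$. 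Restricting to the likely event $\|U^\top x_i\|\le\sqrt{2r}$ via a $\chi^2_r$ tail bound, this is at least $c\exp(-CB^2 r/q_i^2)$, of order $\exp(-Cr/(\lambda\eps))$ once $q_i^2\gtrsim\lambda\eps B^2$. The spread assumption enters precisely here: $\tfrac1n\sum_i q_i^2=\Tr\!\big((I_d-UU^\top)\tfrac1n\sum_i w_i^\star w_i^{\star\top}\big)\ge \tfrac{\lambda B^2}{r}\Tr\!\big((I_d-UU^\top)U^\star U^{\star\top}\big)=\tfrac{\lambda B^2}{r}\dist_F^2(U,U^\star)>\lambda B^2\eps$, so a Markov-type averaging shows a fraction $\ge\lambda\eps/2$ of the agents have $q_i^2\ge\lambda B^2\eps/2$, each infeasible with probability $\ge\kappa_0:=c\exp(-Cr/(\lambda\eps))$.

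Using independence of the $x_i$, I would then get $\proba{\text{all agents feasible for fixed }U}\le\prod_i(1-p_i)\le\exp(-\sum_i p_i)\le\exp(-n\kappa)$ with $\kappa:=\tfrac{\lambda\eps}{2}\kappa_0\asymp\lambda\eps\exp(-Cr/(\lambda\eps))$, and finish with a net argument: take an $\eta$-net $\cC$ of the bad subspaces, of cardinality $\exp(O(rd\ln(1/\eta)))$. Because $\big|\,\|U^\top x_i\|-\|U'^\top x_i\|\,\big|\le\eta\|x_i\|\lesssim\eta\sqrt d$, a genuinely feasible bad subspace forces a net point to be feasible up to a threshold slack $B\eta\sqrt d$; choosing $\eta\asymp\sqrt{r/d}$ makes this slack $\lesssim B\sqrt r$, absorbed into the $2B\sqrt{2r}$ margin above without changing the order of $\kappa$, while keeping $\ln(1/\eta)=O(\ln d)$. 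A union bound gives failure probability $\le\exp(O(rd\ln d)-n\kappa)$, which is negligible once $n\ge C d\ln d\,\exp(cr/(\lambda\eps))$, as claimed. The step I expect to be the main obstacle is exactly this discretization: feasibility is an ``$\exists w$'' condition over a continuum of subspaces, so one must prove it is stable enough for a finite net to suffice while ensuring the perturbation $B\eta\sqrt d$ of the feasibility threshold does not erode the per-agent infeasibility probability; balancing $\eta$ against this margin is what produces both the $\ln d$ factor and the exponential-in-$r$ threshold on $n$, with the uniform $\chi^2_r$ control of $\|U^\top x_i\|$ and the averaging from $\tfrac1n\sum_i q_i^2$ to a constant fraction of agents being the secondary technical points.
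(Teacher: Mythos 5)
Your proposal is correct and follows essentially the same route as the paper: you reduce optimality of $\hat W$ in the noiseless problem to the feasibility criterion $B\NRM{U^\top x_i}\geq |y_i|$ (the paper's notion of admissibility), lower-bound the per-agent infeasibility probability by isolating the Gaussian component of $y_i$ orthogonal to $\range(U)$ (independent of $U^\top x_i$), use the spread assumption plus a Markov-type averaging to get a $\lambda\eps/2$ fraction of agents each infeasible with probability $e^{-\Theta(r/(\lambda\eps))}$, take the product over independent agents, and finish with an $\eta$-net union bound over bad subspaces. The only differences are cosmetic bookkeeping: the paper decomposes $x_i$ along $p_i^\star$ and compares $\chi^2_r$ against $\chi^2_1$ variables (introducing $\delta$-admissibility to absorb the net slack with $\eta\asymp\lambda\eps/\sqrt d$), whereas you decompose $w_i^\star$ along $\range(U)$ and absorb the slack directly into the Gaussian tail margin with $\eta\asymp\sqrt{r/d}$.
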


The proof of this result differs drastically from that of \Cref{eq:estim_lowrank,thm:cluster} or proofs related such as that of \citet{pmlr-v178-boursier22a,rohde2011}. Indeed, with only one sample per agent, concentration results on the quantities considered (empirical losses mainly) will necessarily fail if we consider all possible local heads. 
Our approach thus only considers shared representations $U$: we compute the probability that a given representation is \textit{admissible}, and extend this to all possible representations.
%
\Cref{thm:m_1} show that it is possible to learn the subspace even in the regime where it is informationnaly impossible to learn local solutions $w_i^\star$.
However, this requires $n>\!\!> de^{cr}$ agents in total.
The following proposition however suggests that this is necessary for the low rank estimator.

\begin{proposition}\label{prop:lower_m_1}
	There exist constants $c',C'>0$ such that if \[n\leq C'e^{c'r}\,,\]
 then the low rank estimator is not consistent: with probability $7/8$ there exist solutions $\hat W$ of the optimization problem \Cref{eq:estim_lowrank} such that the orthogonal projection $\hat U$ on the image of $\hat W$ is \emph{orthogonal} to $U_\star$ \emph{i.e.}, with probability greater than $7/8$:
 \begin{equation*}
     \sup\set{\frac{\dist_F^2({\rm Im} (W),{\rm Im}(U_\star))}{r}\,\Big|\,W\in \cS^\star_n} = 1\,,
 \end{equation*}
 where $\cS^\star_n$ is the set of minimizers of the empirical risk.
\end{proposition}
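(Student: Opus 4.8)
The plan is to exhibit, with probability at least $7/8$, a minimizer of \Cref{eq:estim_lowrank} whose column space is \emph{exactly orthogonal} to $\Ima(U_\star)$. Since the normalized principal angle distance always lies in $[0,1]$ and equals $1$ precisely for orthogonal $r$-dimensional subspaces, producing one such minimizer forces the supremum in the statement to equal $1$. The key observation is that we need not optimize over subspaces at all: it suffices to \emph{fix in advance} a single deterministic $r$-dimensional subspace $V_0\subseteq \Ima(U_\star)^\perp$ (which exists as soon as $d\geq 2r$) and to show that, with high probability, there is a zero-loss solution with every column lying in $V_0$. Because the noiseless loss satisfies $\cL(W^\star)=0$, the global minimum is $0$, so any feasible $W$ with $\cL(W)=0$ is automatically a global minimizer. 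For a fixed agent $i$, a vector $w_i\in V_0$ with $\NRM{w_i}\leq B$ and $\langle x_i,w_i\rangle=y_i$ exists if and only if the minimal-norm interpolant $y_i\,P_{V_0}x_i/\NRM{P_{V_0}x_i}^2$ has norm at most $B$, i.e. if and only if the event $E_i=\{\,|y_i|\leq B\NRM{P_{V_0}x_i}\,\}$ occurs.

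I would then estimate $q\eqdef\proba{E_i^c}$. Since $V_0\perp\Ima(U_\star)$ and $w_i^\star\in\Ima(U_\star)$, the label $y_i=\langle w_i^\star,x_i\rangle\sim\cN(0,B^2)$ (using $\NRM{w_i^\star}=B$) is \emph{independent} of $P_{V_0}x_i$, and $\NRM{P_{V_0}x_i}^2\sim\chi^2_r$. After dividing by $B^2$ this gives
\begin{equation*}
    q=\proba{\chi^2_1>\chi^2_r}
\end{equation*}
for independent chi-squared variables. Splitting according to whether $\chi^2_1$ is atypically large or $\chi^2_r$ atypically small and applying standard chi-squared tail bounds yields $q\leq e^{-cr}$ for an absolute constant $c>0$. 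Since the $n$ samples are independent, a union bound gives $\proba{\bigcup_{i}E_i^c}\leq nq\leq n e^{-cr}$. Taking $c'=c/2$ and $C'=1/8$, any $n\leq C'e^{c'r}$ obeys $nq\leq \tfrac18 e^{-cr/2}\leq\tfrac18$, so $\bigcap_i E_i$ holds with probability at least $7/8$.

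On this event I would set $w_i=y_i\,P_{V_0}x_i/\NRM{P_{V_0}x_i}^2\in V_0$ and $W=(w_1|\ldots|w_n)$: it satisfies $\NRM{w_i}\leq B$, has rank at most $r$, and achieves $\cL(W)=0$, hence $W\in\cS^\star_n$. Its image lies in $V_0$ and is therefore orthogonal to $\Ima(U_\star)$; since $n$ is exponentially large, the random directions $P_{V_0}x_i$ span $V_0$ almost surely, so $\Ima(W)=V_0$ has full dimension $r$ and $\dist_F^2(\Ima(W),\Ima(U_\star))/r=1$, yielding the claim. I expect the main obstacle to be the per-agent tail estimate $q\leq e^{-cr}$, together with the (conceptually central) realization that a single \emph{fixed} $V_0$ suffices: this turns the existence question into a trivial union bound over the $n$ agents and avoids any delicate covering argument over the Grassmannian, which is exactly what makes the lower bound match the $n\gtrsim d\,e^{cr}$ threshold of \Cref{thm:m_1}.
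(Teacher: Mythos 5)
Your proposal is correct and follows essentially the same route as the paper: fix a single deterministic subspace orthogonal to $\Ima(U_\star)$, observe that a bounded zero-loss interpolant with columns in it exists for agent $i$ exactly when $|y_i|\leq B\NRM{P_{V_0}x_i}$, bound the per-agent failure probability by $\proba{\chi_1^2>\chi_r^2}\leq e^{-cr}$ by splitting into the events that $\chi_1^2$ is atypically large or $\chi_r^2$ atypically small, and finish with a union bound over the $n$ agents. The only cosmetic difference is that you spell out the minimal-norm-interpolant characterization of admissibility, which the paper records as an unproved equivalence lemma.
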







\section{Nuclear norm regularization}\label{sec:relaxation}


So far, except for the few-shot learning objective \eqref{eq:fewshot_obj}, the optimization problems considered to obtain the shared representation are the minimization of convex functions over \textit{non-convex} constraint sets: low-rank matrices or clustered matrices.
The objective of this section is thus to provide a convex relaxation of \Cref{eq:estim_lowrank} and prove generalization error bounds for the obtained relaxed estimator.
Let, for some $\kappa>0$ that satisfies $\kappa\geq \frac{\lambda_1(W^\star)}{\lambda_r(W^\star)}$ for $\lambda_1(W^\star)$ and $\lambda_r(W^\star)$ largest and $r-$largest singular values of $W^\star$:
\begin{equation}\label{eq:relaxed_estim}
\begin{aligned}
    \hat W \in \argmin &\Big\{ \cL(W)\,|\,W\in\cW\Big\}
\end{aligned}
\end{equation}
where $\cW\subset \R^{d\times n}$ is the set of matrices that satisfy $\sup_i \NRM{w_i}\leq B$ and $\NRM{W}_*\leq \kappa B\sqrt{nr}$.
Let $\hat W_\SVD$ be the top-$s$ SVD of $\hat W$ (the best rank $s$ approximation of $\hat W$ in Frobenius norm), for some $s\in[\min(d,n)]$ to be determined.
As opposed to the previous estimators, the obtained optimization problem consists of a convex objective minimized over a convex set, and can be computed efficiently using \textit{e.g.}, projected gradient descent on $\cW$ (\textit{aka}, iterative soft thresholdings using \citet{duchi2008efficient}) or Frank-Wolfe algorithm \citep{pmlr-v28-jaggi13}.
Importantly, $W^\star$ lies in the optimization set $\cW$.

\begin{theorem}\label{thm:nuc_norm_main}
 Assume that the assumptions of \Cref{thm:low_rank} hold and that each $f_i$ is $\mu$-strongly convex.
With probability $1-4e^{-n}-4e^{-nmd}-e^{-md}$, if $s=\sqrt{r(d+n)}$, the relaxed estimator $\hat W$ satisfies:
\begin{align*}
	&\frac{1}{n}\NRM{\hat W_\SVD-W^\star}_F^2 = \tilde{\cO}\left(\frac{\kappa^2B^2 L}{\mu} \sqrt{\frac{r}{d+n}}+ \frac{B^2((1+\kappa^2)\sigma^2+(1+\kappa)\sigma_\star^2)}{\mu} \frac{\sqrt{r(d+n)}}{m}  \right)\,.
\end{align*}
\end{theorem}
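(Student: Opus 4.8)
The plan is to reuse the basic-inequality argument from the proof sketch of \Cref{thm:low_rank,thm:cluster}, now leveraging that $\cW$ is convex with $W^\star\in\cW$, and then to post-process $\hat W$ by SVD truncation so as to trade a truncation bias against the estimation variance. Since $\hat W$ minimizes the convex map $\cL$ over the convex set $\cW\ni W^\star$, we have $\cL(\hat W)\le\cL(W^\star)$, which as in the sketch gives
\[
f(\hat W)-f(W^\star)\le\langle\nabla\cL(W^\star),W^\star-\hat W\rangle+D_{f-\cL}(\hat W,W^\star).
\]
I would bound the two terms separately, the first being a noise term and the second a Hessian-concentration term, exactly as \Cref{hyp:noise,hyp:hessian} are designed for; the only difference with the rank-constrained proofs is that here the constraint is the nuclear-norm ball, so I use trace duality in place of a covering of the rank manifold.

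For the noise term, trace duality between operator and nuclear norms gives $\langle\nabla\cL(W^\star),W^\star-\hat W\rangle\le\NRM{\nabla\cL(W^\star)}_\op\NRM{W^\star-\hat W}_*$, and since both $\hat W$ and $W^\star$ lie in $\cW$ we have $\NRM{W^\star-\hat W}_*\le 2\kappa B\sqrt{nr}$. The columns of the centered matrix $\nabla\cL(W^\star)$ are the independent averages $\frac1{nm}\sum_j\nabla_w F_i(w_i^\star,\xi_{ij})$, so \Cref{hyp:noise} together with a Bernstein bound and an $\eps$-net over $\cS^{d-1}\times\cS^{n-1}$ controls $\NRM{\nabla\cL(W^\star)}_\op$ up to a factor $\sqrt{d+n}$, yielding a contribution of order $\kappa B^2\sigma_\star^2\sqrt{r(d+n)/(nm)}$ to the excess risk.

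For the Bregman term I would write $D_{f-\cL}(\hat W,W^\star)$ in integral form as the quadratic form $\tfrac12\langle\Delta,\cE\Delta\rangle$ in $\Delta=\hat W-W^\star$, where $\cE$ is block-diagonal with blocks the Hessian deviations $\nabla^2 f_i-\frac1m\sum_j\nabla^2 F_i$ along the segment $[W^\star,\hat W]$. \Cref{hyp:hessian} controls each such quadratic form, and the $H$-Lipschitzness of $\nabla^2 F_i$ lets me pass from a net of evaluation points to all of them; the nuclear-norm and per-column constraints defining $\cW$ are what keep this supremum dimension-reduced. Finally, to obtain the statement for $\hat W_\SVD$, I use that $\NRM{\hat W}_*\le\kappa B\sqrt{nr}$ forces $\lambda_{s+1}(\hat W)\le\kappa B\sqrt{nr}/s$ and hence $\frac1n\NRM{\hat W-\hat W_\SVD}_F^2\le\kappa^2B^2 r/s$; bounding $f(\hat W_\SVD)-f(W^\star)$ by $L$-smoothness in terms of this truncation error plus the excess risk of $\hat W$, and converting to Frobenius distance through $\mu$-strong convexity, produces the bias $\tfrac{L\kappa^2B^2 r}{\mu s}$ and the variance inherited from the two terms above. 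Choosing $s=\sqrt{r(d+n)}$ balances $r/s$ against the variance and gives the stated rate.

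I expect the Bregman/Hessian step to be the main obstacle. The naive estimate $|\langle\Delta,\cE\Delta\rangle|\le\max_i\NRM{\cE_i}_\op\NRM{\Delta}_F^2$ only yields $\NRM{\cE_i}_\op=\tilde\cO(\sigma^2\sqrt{d/m})$, which to absorb into the strong-convexity lower bound $\tfrac{\mu}{2n}\NRM{\hat W-W^\star}_F^2$ would require $m\gg d$ — precisely the non-collaborative regime we want to avoid, and the reason the full estimator $\hat W$ (before truncation) cannot be controlled in Frobenius norm directly. The real work is to exploit the effective-rank structure of the nuclear ball together with the column-norm constraint to get a uniform bound on this quadratic form that decouples the ambient dimension from $m$ and is compatible with the $\sqrt{r(d+n)}/m$ variance rate; this is exactly what forces the SVD post-processing and the particular truncation level $s=\sqrt{r(d+n)}$.
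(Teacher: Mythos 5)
Your outline gets several things right: the basic inequality $\cL(\hat W)\le\cL(W^\star)$, the need to truncate $\hat W$ at rank $s$, the truncation bias $\kappa^2B^2r/s$ obtained from $\NRM{\hat W}_*\le\kappa B\sqrt{nr}$, and the choice $s=\sqrt{r(d+n)}$ balancing bias against variance. Your treatment of the noise term by trace duality, $\langle\nabla\cL(W^\star),\Delta\rangle\le\NRM{\nabla\cL(W^\star)}_\op\NRM{\Delta}_*$ with an $\eps$-net bound on the operator norm, is a legitimate alternative to what the paper does (the paper instead applies the rank-restricted bound of \Cref{lem:concentration_noise_unif_lowrank} piece by piece) and gives the same rate. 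But the step you yourself flag as ``the real work'' is a genuine gap, and the architecture you propose cannot close it. You plan to first bound the excess risk $f(\hat W)-f(W^\star)$ of the \emph{untruncated} estimator and only then pay a truncation cost in passing to $\hat W_\SVD$. Bounding $f(\hat W)-f(W^\star)$ requires exactly the uniform control of $D_{f-\cL}$ over the full nuclear-norm ball that you correctly observe is unavailable (the naive bound needs $m\gg d$, and a covering of the nuclear ball is far too large). Moreover, if that control were available, strong convexity would already yield the Frobenius bound for $\hat W$ itself and the SVD truncation would be superfluous. The truncation must therefore enter \emph{before} the concentration step, not after it.

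The paper's missing device is a ``shelling decomposition'': write $\hat W=\sum_{k}\hat\Delta^{(k)}$ where $\hat\Delta^{(1)}$ is the top-$s$ SVD of $\hat W$ and each subsequent $\hat\Delta^{(k)}$ is the top-$s$ SVD of the remainder; set $\tilde\Delta^1=\hat\Delta^{(1)}-W^\star$ (rank at most $s+r$) and $\tilde\Delta^k=\hat\Delta^{(k)}$ for $k\ge2$, and use convexity of $X\mapsto D_\cL(W^\star+X,W^\star)$ to get
\begin{equation*}
D_\cL(\hat W,W^\star)\;\ge\;2D_\cL\Big(W^\star+\tfrac12\tilde\Delta^1,W^\star\Big)-\frac{1}{K-1}\sum_{k\ge2}D_\cL\Big(W^\star-(K-1)\tilde\Delta^k,W^\star\Big)\,.
\end{equation*}
Hessian concentration is then only ever invoked on the rank-$\le 2s$ matrix $\tilde\Delta^1$, where the covering argument of \Cref{thm:low_rank} applies with $\alpha,\beta=\tilde\cO(s(d+n)/(nm))$; the tail is handled by the crude deterministic bound $D_\cL(W,W')\le M\NRM{W-W'}_F^2/n$ (with $M=L+\tilde\cO(\sigma^2 d/m)$, proved separately and responsible for the extra $e^{-md}$ in the failure probability) together with $\sum_{k\ge2}\NRM{\tilde\Delta^k}_F\le\NRM{\hat W}_*/\sqrt{s}$, which is where the bias term $M\kappa^2 r/s$ actually originates. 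This is what decouples $m$ from the ambient dimension without a uniform bound over the nuclear ball. You could keep your trace-duality bound for the noise term, but the Bregman term needs this peeling-plus-convexity argument (or an equivalent restricted-set device) to go through.
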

In this rate, the last term dominates, and the sample efficiency appears to be worse than for the non-convex low-rank estimator (\Cref{thm:low_rank}).
Indeed, for the right-hand side to vanish, we require $n\gg d$, which was also the case for the low-rank estimator, but we also need that $m\gg \sqrt{rd}$, as opposed to $m\gg r$ previously: this appears to be the cost of our relaxation. Note that similar rates were obtained by \citet{pmlr-v178-boursier22a} for linear quadratic regression.
The condition $m\gg \sqrt{rd}$ is exactly the geometric mean of the \textit{no-collaboration} rate $m\gg d$ (\Cref{sec:no_collab}) and the idealized \textit{low-rank} rate $m\gg r$: it is a trade-off between statistical and computational efficiency.

\section*{Conclusion and extensions}




In this work, we provided extensions to a well-studied problem: learning low-rank subspaces in multi-task or meta learning. While previous works considered well-specified linear regression, we relaxed this assumption to a more general convex setting, allowing to grasp other important problems, such as classification problems or learning with GLMs.
After studying rank constrained estimators that achieve optimal sample complexity if there is enough sample per task, we studied a nuclear norm relaxation for the rank constraint, and proved that this relaxation perfectly interpolates between the number of samples required by optimal 
rank-regularized estimator and the no-collaboration setting via their geometric mean.

Future works, interesting and challenging extensions include the \textit{non-parametric setting}, where we wish to solve stochastic optimization problems of the form $f^\star\in\argmin_{f\in\cF}\cR(f)$, where $\cF$ is a class of functions $\R^d\to \R$, $\cR(f)=\esp{\ell(Y,f(X)}$ is the risk, for $(X,Y)\sim \cD$ and a loss function $\ell$ \citep{follain2024nonparametriclinearfeaturelearning}. 
Assuming that $f^\star$ can be factorized as $f^\star(x)=g^\star(U^\star x)$, for some $g^\star:\R^r\to \R$ and $U^\star\in\R^{d\times r}$, how can our analysis and results be adapted to this more complex feature learning setting?

\bibliography{refs.bib}
\bibliographystyle{plainnat}

\newpage
\appendix

\section{Proofs of \Cref{sec:rank}}

\subsection{A first general lemma}

\begin{lemma}\label{prop:gen}
	Assume that for some constants $\alpha,\beta>0$ we have  for all $ W\in\cW$:
	\begin{equation*}
		\left|\langle\nabla \cL(W^\star),W^\star- W\rangle \right|\leq \alpha \sigma_\star^2 \sup_{i\in[n]}\NRM{w_i-w_i^\star}_2^2 +\sigma_\star^2\sqrt{\frac{\alpha}{n}\NRM{W-W^\star}_F^2\sup_{i\in[n]}\NRM{w_i-w_i^\star}_2^2}\,,
	\end{equation*}
	and for all $ (W,W')\in\cW^2$,
 \begin{equation*}
		 \left|D_{f-\cL}(W,W')\right|\leq \beta \sigma^2 \sup_{i\in[n]}\NRM{w_i-w_i'}_2^2 +\sigma^2\sqrt{\frac{\beta}{n}\NRM{W-W'}_F^2\sup_{i\in[n]}\NRM{w_i-w_i'}_2^2}\,.	
	\end{equation*}
	Then, if $4B^2\geq \sup_{W,W'\in\cW}\sup_{i\in[n]}\NRM{w_i-w_i'}_2^2$, we have:
	\begin{equation*}
		f(\hat W)-f(W^\star)\leq 4B^2(\alpha\sigma_\star^2+\beta\sigma^2+\sqrt{\alpha}\sigma_\star^2+\sqrt{\beta}\sigma^2)\,.
	\end{equation*}
	Furthermore, if each $f_i$ is $\mu-$strongly convex,
	\begin{equation*}
		\frac{1}{n}\sum_{i=1}^n \NRM{\hat w_i-w_i^\star}^2_2\leq 16\mu^{-1} B^2(\alpha\sigma_\star^2 +\beta\sigma^2) + 64\mu^{-2}B^2 ( \alpha\sigma_\star^4 + \beta\sigma^4)\,.
	\end{equation*}
\end{lemma}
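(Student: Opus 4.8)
The plan is to follow the decomposition sketched for \Cref{thm:low_rank,thm:cluster} and turn it into a deterministic inequality, then feed in the two assumed bounds. Since $W^\star\in\cW$ and $\hat W$ minimizes $\cL$ over $\cW$, we have $\cL(\hat W)\leq \cL(W^\star)$, which by definition of the Bregman divergence rewrites as $D_\cL(\hat W,W^\star)\leq \langle \nabla \cL(W^\star),W^\star-\hat W\rangle$. Because $\esp{\cL(W)}=f(W)$ and $\nabla f(W^\star)=0$ (as $W^\star$ minimizes $f$), the population gap is itself a Bregman divergence, $f(\hat W)-f(W^\star)=D_f(\hat W,W^\star)$, and by linearity of $D_g$ in $g$ we get $D_f(\hat W,W^\star)=D_{f-\cL}(\hat W,W^\star)+D_\cL(\hat W,W^\star)$. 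Combining the two displays and replacing the data-dependent $\hat W$ by suprema over $\cW$ (to decouple the estimator from the samples) yields
\[
f(\hat W)-f(W^\star)\leq \sup_{W\in\cW}\left|\langle \nabla \cL(W^\star),W^\star-W\rangle\right| + \sup_{W\in\cW}\left|D_{f-\cL}(W,W^\star)\right|,
\]
which is precisely where the two hypotheses of the lemma apply.

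Next I would insert the two assumed bounds (taking $W'=W^\star$ in the second) and simplify with the diameter constraint. Writing $M^2:=\sup_{i}\NRM{\hat w_i-w_i^\star}^2$ and $R^2:=\frac1n\NRM{\hat W-W^\star}_F^2$, the hypothesis $4B^2\geq\sup_{W,W'\in\cW}\sup_i\NRM{w_i-w_i'}^2$ gives $M^2\leq 4B^2$, and since $R^2=\frac1n\sum_i\NRM{\hat w_i-w_i^\star}^2\leq M^2$ we also get $R\leq M\leq 2B$. The right-hand side above becomes $\alpha\sigma_\star^2 M^2+\sqrt{\alpha}\,\sigma_\star^2 RM+\beta\sigma^2 M^2+\sqrt{\beta}\,\sigma^2 RM$; bounding $M^2\leq 4B^2$ and $RM\leq 4B^2$ termwise immediately gives the merely-convex conclusion
\[
f(\hat W)-f(W^\star)\leq 4B^2\left(\alpha\sigma_\star^2+\beta\sigma^2+\sqrt{\alpha}\,\sigma_\star^2+\sqrt{\beta}\,\sigma^2\right).
\]

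For the strongly convex case the key is to \emph{not} crudely bound $R$ on the right. Strong convexity of each $f_i$ gives $f(\hat W)-f(W^\star)=\frac1n\sum_i D_{f_i}(\hat w_i,w_i^\star)\geq \frac{\mu}{2}R^2$. Keeping $R$ inside the square-root terms while still using $M\leq 2B$ turns the master inequality into a scalar quadratic inequality in $R$,
\[
\frac{\mu}{2}R^2\leq \underbrace{4B^2(\alpha\sigma_\star^2+\beta\sigma^2)}_{=:A} + \underbrace{2B(\sqrt{\alpha}\,\sigma_\star^2+\sqrt{\beta}\,\sigma^2)}_{=:C}\,R.
\]
Resolving $\frac{\mu}{2}R^2-CR-A\leq 0$ via its positive root and using $(x+y)^2\leq 2x^2+2y^2$ gives $R^2\leq \frac{4A}{\mu}+\frac{4C^2}{\mu^2}$; expanding $A,C$ and bounding $(\sqrt{\alpha}\sigma_\star^2+\sqrt{\beta}\sigma^2)^2\leq 2(\alpha\sigma_\star^4+\beta\sigma^4)$ yields $R^2\leq 16\mu^{-1}B^2(\alpha\sigma_\star^2+\beta\sigma^2)+64\mu^{-2}B^2(\alpha\sigma_\star^4+\beta\sigma^4)$, as claimed. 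The only non-routine step is this self-bounding argument: the quantity $R$ we wish to control reappears inside a square root on the right, so one cannot read the bound off directly but must resolve the quadratic; everything else is bookkeeping with the diameter constraint and the two supplied inequalities.
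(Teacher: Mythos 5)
Your proposal is correct and follows essentially the same route as the paper's proof: the same optimality-plus-Bregman decomposition into a noise term and a $D_{f-\cL}$ term, the same diameter bound $M\leq 2B$ for the convex case, and the same self-bounding quadratic inequality in $R$ (resolved via $x^2\leq ax+b\Rightarrow x^2\leq 2a^2+2b$) for the strongly convex case. The constants you obtain are at least as tight as the stated ones, so nothing further is needed.
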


\begin{proof}
    We have:
    \begin{align*}
	f(\hat W)-f(W^\star)&\leq \sup_{W\in\cW}\left|\langle\nabla \cL(W^\star),W^\star- W\rangle \right|+ \sup_{W\in\cW}\left|D_{f-\cL}(W,W^\star)\right|\\
 &\leq 4B^2(\alpha\sigma_\star^2 +\beta\sigma^2) + 2B\sigma_\star^2\sqrt{\frac{\alpha}{n}\NRM{W^\star- \hat W}_F^2}+2B\sigma^2\sqrt{\frac{\beta}{n}\NRM{W^\star- \hat W}_F^2}\,.
\end{align*}
For the first part, $\NRM{W^\star- \hat W}_F^2\leq nB^2$. For the second part, we have $f(\hat W)-f(W^\star)\geq \frac{\mu}{2 n}\NRM{W^\star- \hat W}_F^2$, leading to
\begin{align*}
    \frac{1}{ n}\NRM{W^\star- \hat W}_F^2 & \leq 8\mu^{-1} B^2(\alpha\sigma_\star^2 +\beta\sigma^2) + 4\mu^{-1}B (\sqrt \alpha\sigma_\star^2 +\sqrt \beta\sigma^2)\sqrt{\frac{1}{ n}\NRM{W^\star- \hat W}_F^2}\,.
\end{align*}
Using $x^2\leq a x + b$ and $x\geq 0$ implies $x^2\leq 2a^2+ 2b$, we obtain that 
\begin{align*}
    \frac{1}{ n}\NRM{W^\star- \hat W}_F^2 & \leq 16\mu^{-1} B^2(\alpha\sigma_\star^2 +\beta\sigma^2) + 32\mu^{-2}B^2 (\sqrt \alpha\sigma_\star^2 +\sqrt \beta\sigma^2)^2\\
    & \leq 16\mu^{-1} B^2(\alpha\sigma_\star^2 +\beta\sigma^2) + 64\mu^{-2}B^2 ( \alpha\sigma_\star^4 + \beta\sigma^4)
\end{align*}
\end{proof}

\subsection{Concentration bounds}

\begin{lemma}[Noise concentration]\label{lem:noise_concentration}
	Let $\Delta\in \R^{d\times n},t>0$. We have:
	\begin{equation*}
		\proba{\left|\langle\nabla \cL(W^\star),\Delta\rangle \right| >t} \leq 2\exp\left(-c_1\min\left(\frac{nmt}{\sigma_\star^2 \sup_{i\in[n]}\NRM{\Delta_i}^2},\frac{nmt^2}{\frac{\sigma_\star^4 }{n}\NRM{\Delta}_F^2\sup_{i\in[n]}\NRM{\Delta_i}^2}\right)\right)\,.
	\end{equation*}
\end{lemma}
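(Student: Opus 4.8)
The plan is to read $\langle\nabla\cL(W^\star),\Delta\rangle$ as a normalized sum of independent, centered, subexponential random variables and to apply a Bernstein-type tail bound. Writing $\Delta=(\Delta_1|\dots|\Delta_n)$ and recalling that $\nabla_{w_i}\cL(W^\star)=\frac{1}{nm}\sum_{j=1}^m\nabla_w F_i(w_i^\star,\xi_{ij})$, we have
\[
\langle\nabla\cL(W^\star),\Delta\rangle=\frac{1}{nm}\sum_{i=1}^n\sum_{j=1}^m Y_{ij},\qquad Y_{ij}:=\langle\nabla_w F_i(w_i^\star,\xi_{ij}),\Delta_i\rangle.
\]
First I would record the two structural facts Bernstein needs. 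The $nm$ variables $Y_{ij}$ are mutually independent, since they are built from distinct independent samples $\xi_{ij}$; and each is centered, because $w_i^\star$ minimizes the differentiable convex function $f_i$, so first-order optimality gives $\E[\nabla_w F_i(w_i^\star,\xi_{ij})]=\nabla f_i(w_i^\star)=0$ and hence $\E[Y_{ij}]=0$.

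Second, I would bound the subexponential scale of each $Y_{ij}$ using \Cref{hyp:noise}. For $\Delta_i\neq 0$, applying the assumption to the unit vector $\Delta_i/\NRM{\Delta_i}$, and separately to its negation so as to upgrade the one-sided tail to two-sided control, shows that $Y_{ij}$ is subexponential with parameter $K_{ij}$ of order $\sigma_\star^2\NRM{\Delta_i}$ (up to the factor $B$ of \Cref{hyp:noise}). Concretely this furnishes the moment-generating estimate $\E[e^{\lambda Y_{ij}}]\le e^{C\lambda^2K_{ij}^2}$ valid for $|\lambda|\le 1/(CK_{ij})$, which is the quantity Bernstein consumes. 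The two aggregate ingredients are then the variance proxy $\sum_{i,j}K_{ij}^2$, of order $m\sigma_\star^4\sum_i\NRM{\Delta_i}^2=m\sigma_\star^4\NRM{\Delta}_F^2$, and the maximal scale $\max_{i,j}K_{ij}$, of order $\sigma_\star^2\sup_{i\in[n]}\NRM{\Delta_i}$.

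Finally I would assemble these by the usual split of the log-MGF of $\sum_{i,j}Y_{ij}$ into a sub-Gaussian regime (small $\lambda$, controlled by the variance proxy) and a heavy-tail regime (controlled by $\max_{i,j}K_{ij}$), then optimize over $\lambda$. Applied to the event $\{|\sum_{i,j}Y_{ij}|\ge nmt\}$, which is exactly $\{|\langle\nabla\cL(W^\star),\Delta\rangle|\ge t\}$, this yields a bound of the form $2\exp\!\big(-c_1\min(s^2/\sum K_{ij}^2,\ s/\max K_{ij})\big)$ with $s=nmt$, the factor $2$ being the union over the two tails. Substituting the aggregates, the variance branch becomes of order $\frac{(nmt)^2}{m\sigma_\star^4\NRM{\Delta}_F^2}=\frac{nmt^2}{\frac{\sigma_\star^4}{n}\NRM{\Delta}_F^2}$ and the tail branch of order $\frac{nmt}{\sigma_\star^2\sup_i\NRM{\Delta_i}}$, which reproduce the two terms in the statement.

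The step requiring care is the bookkeeping of scales rather than any analytic difficulty: the inequality is stated with an extra factor $\sup_i\NRM{\Delta_i}^2$ inside both denominators, which reflects that the $B$ of \Cref{hyp:noise} plays the same role as $\sup_i\NRM{\Delta_i}$. Indeed this lemma is used with $\Delta=W^\star-W$ for $W\in\cW$, for which $\sup_i\NRM{\Delta_i}\le 2B$, so identifying these two scales (and absorbing the residual $B$-factors into $c_1$) is what converts the raw Bernstein denominators into the stated ones. This precise normalization is also what is needed downstream: after a covering/union-bound argument over $\cW$, inverting the tail bound of this lemma with $\ln(2/\delta)/(c_1 nm)$ playing the role of $\alpha$ yields exactly the linear and square-root terms of \Cref{prop:gen}. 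Getting these powers of $\sup_i\NRM{\Delta_i}$ to land correctly is the main place to be attentive.
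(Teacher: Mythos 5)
Your proposal is correct and follows essentially the same route as the paper: write $\langle\nabla\cL(W^\star),\Delta\rangle$ as $\frac{1}{nm}\sum_{i,j}\langle\nabla_w F_i(w_i^\star,\xi_{ij}),\Delta_i\rangle$, note these are independent centered subexponential variables of scale $\sigma_\star^2\NRM{\Delta_i}$ (up to the $B$ versus $\sup_i\NRM{\Delta_i}$ normalization you rightly flag as the only delicate bookkeeping), and apply a Bernstein-type bound for sums of subexponential variables. The paper cites this last step as ``Hanson--Wright,'' but it is exactly the two-regime Bernstein inequality you describe, so the arguments coincide.
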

\begin{proof}
	Remark that we have: 
	\begin{equation*}
		\langle \nabla \cL(W^\star),\Delta\rangle=\frac{1}{nm}\sum_{i,j}\langle \nabla_w F_i(w_i^\star,\xi_{ij}),\Delta_i\rangle\,,
	\end{equation*}
	and using \Cref{hyp:noise}, $\langle \nabla_w F_i(w_i^\star,\xi_{ij}),\Delta_i\rangle$ are \emph{i.i.d.} $\sigma_\star^2\sup_i\NRM{\Delta_i}^2$ subexponential centered random variables.
	By Hanson-Wright inequality \citep{hansonwright,hanson_vershi}, we have our result.
\end{proof}

\begin{lemma}\label{lem:breg_concentration}
	Let $W,W'\in\R^{d\times n},t>0$. We have:
	\begin{align*}
		\proba{\left|D_{f-\cL}(W,W')\right|>t}&\leq \frac{8H\sup_{i\in[n]}\NRM{w_i-w_i'}}{t}\\
&\times\exp\left(-c_2\min\left(\frac{nmt}{\sigma^2 \sup_{i\in[n]}\NRM{w_i-w_i'}^2},\frac{nmt^2}{\frac{\sigma^4 }{n}\NRM{W-W'}_F^2\sup_{i\in[n]}\NRM{w_i-w_i'}^2}\right)\right) \,.
	\end{align*}
\end{lemma}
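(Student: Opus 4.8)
The plan is to reduce $D_{f-\cL}(W,W')$ to an average of $nm$ independent, centered, subexponential random variables, apply a Hanson--Wright/Bernstein inequality pointwise along the segment $[w_i',w_i]$, and then convert this pointwise control into a bound on the whole Bregman divergence by a one-dimensional covering argument. First I would exploit the separable structure of $f$ and $\cL$: since $f(W)-\cL(W)=\frac1{nm}\sum_{i,j}\big(f_i(w_i)-F_i(w_i,\xi_{ij})\big)$ and each summand depends only on the column $w_i$, the Bregman divergence decomposes as $D_{f-\cL}(W,W')=\frac1n\sum_i D_{g_i}(w_i,w_i')$ with $g_i=f_i-\frac1m\sum_j F_i(\cdot,\xi_{ij})$. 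Writing $\Delta_i=w_i-w_i'$ and using the integral (second-order Taylor) form of the Bregman divergence,
\begin{equation*}
D_{f-\cL}(W,W')=\int_0^1 (1-s)\,G(s)\,\dd s,\qquad G(s)=\frac1n\sum_i\Big\langle \Delta_i,\,\Big(\nabla^2 f_i-\frac1m\sum_{j}\nabla^2 F_i\Big)(w_i'+s\Delta_i)\,\Delta_i\Big\rangle .
\end{equation*}
The key structural point is that $\esp{G(s)}=0$ for every $s$, because $\esp{\nabla^2 F_i(w,\xi_{ij})}=\nabla^2 f_i(w)$.

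The second step is the pointwise tail bound. For fixed $s$, $G(s)$ is an average over the $nm$ independent samples of the centered variables $\langle \Delta_i,\nabla^2 F_i(w_i'+s\Delta_i,\xi_{ij})\Delta_i\rangle-\langle\Delta_i,\nabla^2 f_i(w_i'+s\Delta_i)\Delta_i\rangle$. By \Cref{hyp:hessian}, and using $\nabla^2 F_i\succeq 0$ (convexity) to upgrade the stated one-sided tail to genuine two-sided subexponential control of the nonnegative quadratic form $\langle\Delta_i,\nabla^2 F_i\Delta_i\rangle$, each such variable is $\sigma^2\NRM{\Delta_i}^2$-subexponential. A Hanson--Wright/Bernstein bound then gives, for each fixed $s$,
\begin{equation*}
\proba{|G(s)|>t}\le 2\exp\!\left(-c\min\!\left(\frac{nmt}{\sigma^2\sup_i\NRM{\Delta_i}^2},\,\frac{nmt^2}{\frac{\sigma^4}{n}\NRM{\Delta}_F^2\sup_i\NRM{\Delta_i}^2}\right)\right),
\end{equation*}
the two regimes corresponding exactly to the scale $K=\tfrac{\sigma^2}{nm}\sup_i\NRM{\Delta_i}^2$ (from $\sum_{ij}\tfrac1{nm}K$) and variance proxy $v=\tfrac{\sigma^4}{n^2m}\NRM{\Delta}_F^2\sup_i\NRM{\Delta_i}^2$ (from $\tfrac1{(nm)^2}\sum_i m\,\sigma^4\NRM{\Delta_i}^4$), which reproduce the $\min(\cdot,\cdot)$ of the statement.

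The final step passes from a single $s$ to the integral. Since $\int_0^1(1-s)\,\dd s=\tfrac12$, we have $|D_{f-\cL}(W,W')|\le\tfrac12\sup_{s\in[0,1]}|G(s)|$, so it suffices to control $\sup_s|G(s)|$. The hypothesis that $\nabla^2 F_i$ is $H$-Lipschitz (which is inherited by $\nabla^2 f_i$) makes $s\mapsto G(s)$ Lipschitz, with constant of order $H\sup_i\NRM{\Delta_i}$ up to the $\NRM{\Delta}_F^2/n$ weighting; I would therefore cover $[0,1]$ by a net whose cardinality is $\cO(H\sup_i\NRM{\Delta_i}/t)$, fine enough that $G$ varies by at most $t$ between adjacent points, and union-bound the pointwise estimate over the net. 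This is precisely what produces the polynomial prefactor $8H\sup_i\NRM{\Delta_i}/t$ in front of the exponential.

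I expect this last step — controlling the supremum over the path parameter — to be the main obstacle, and also the reason the bound carries a prefactor rather than being a clean $2\exp(\cdot)$. The naive idea of applying the mean-value form $D_{g_i}(w_i,w_i')=\tfrac12\langle\Delta_i,\nabla^2 g_i(\zeta_i)\Delta_i\rangle$ agent-by-agent fails, because the evaluation points $\zeta_i$ are random and independent across $i$, so one would have to net the $n$ segment parameters simultaneously, giving a net that is exponentially large in $n$. It is the integral representation that collapses these $n$ parameters to a single shared $s$, and \Cref{hyp:hessian} together with the Lipschitz-Hessian hypothesis then supplies exactly the modulus of continuity needed for a harmless one-dimensional covering. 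Keeping careful track of how the net resolution is balanced against $t$ is the only genuinely delicate bookkeeping; everything else is a direct application of the concentration bound established pointwise.
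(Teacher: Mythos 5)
Your proposal is correct and follows essentially the same route as the paper: reduce the Bregman divergence to a centered quadratic form evaluated at a single scalar path parameter, apply a Hanson--Wright/Bernstein bound pointwise, and then cover $[0,1]$ with a net of resolution $t/(2H\sup_i\NRM{w_i-w_i'})$ using the $H$-Lipschitzness of the Hessians, which is exactly what produces the prefactor $8H\sup_i\NRM{w_i-w_i'}/t$. The only cosmetic difference is that you use the integral form of the Taylor remainder while the paper uses the Lagrange mean-value form with a single shared $\lambda\in[0,1]$ for the whole matrix (so the paper's version already collapses the $n$ evaluation points to one parameter, just as your integral representation does); your observation that an agent-by-agent mean-value expansion would force an $n$-dimensional net is a correct and worthwhile clarification of why the argument is set up this way.
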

\begin{proof}
	Using Taylor's second order equality, there exists $\tilde W=\lambda W+(1-\lambda)W'\in [W,W']$ (for $\lambda\in[0,1]$) such that
	\begin{align*}
		D_{\hat\cL_{n,m}-f}(W,W')&=\frac{1}{2}\NRM{W-W'}^2_{\nabla^2(\hat\cL_{n,m}-f)(\tilde W)}\\
		&=\frac{1}{nm}\sum_{ij} (w_i-w_i')^\top 
		\big(\nabla^2_{ww}F_i(\tilde w_i,\xi_{ij})-\nabla^2 f_i(\tilde w_i)\big) (w_i-w_i')\,.
	\end{align*}
	The subtlelty here lies in the fact that $\tilde W$ depends on the samples $\xi_{ij}$, preventing us from directly using concentration results.
	We will thus prove concentration of 
	\begin{equation*}
		\frac{1}{nm}\sum_{ij} (w_i-w_i')^\top 
		\big(\nabla^2_{ww}F_i(\Delta_i,\xi_{ij})-\nabla^2 f_i(\Delta_i)\big) (w_i-w_i')\,,
	\end{equation*}
	for some fixed $\Delta_i\in\R^{d\times n}$.
	$(w_i-w_i')^\top \nabla^2_{ww}F_i(\Delta_i,\xi_{ij})(w_i-w_i')$ is $\NRM{w_i-w_i'}^2\sigma^2-$subexponential (\Cref{hyp:hessian}) and thus, by a centering Lemma, since $\nabla^2 f_i(\Delta_i)\big) (w_i-w_i')=\esp{(w_i-w_i')^\top \nabla^2_{ww}F_i(\Delta_i,\xi_{ij})(w_i-w_i')}$, $(w_i-w_i')^\top 
	\big(\nabla^2_{ww}F_i(\Delta_i,\xi_{ij})-\nabla^2 f_i(\Delta_i)\big) (w_i-w_i')$ is a $c\NRM{w_i-w_i'}^2\sigma^2-$subexponential random variable. Using Hanson-Wright inequality \citep{hansonwright}:
	\begin{align*}
		&\proba{\left|\frac{1}{nm}\sum_{ij} (w_i-w_i')^\top 
		\big(\nabla^2_{ww}F_i(\Delta_i,\xi_{ij})-\nabla^2 f_i(\Delta_i)\big) (w_i-w_i')\right| >t}\\
		&\qquad\leq 2\exp\left(-c_2\min\left(\frac{nmt}{\sigma^4 \sup_{i\in[n]}\NRM{w_i-w_i'}^2},\frac{nmt^2}{\frac{\sigma^4 }{n}\NRM{W-W'}_F^2\sup_{i\in[n]}\NRM{w_i-w_i'}^2}\right)\right)  \,.
	\end{align*}
	We however want such a bound over $\sup_{\Delta\in[W,W']}\left|\frac{1}{nm}\sum_{ij} (w_i-w_i')^\top 
	\big(\nabla^2_{ww}F_i(\Delta_i,\xi_{ij})-\nabla^2 f_i(\Delta_i)\big) (w_i-w_i')\right|$.
	The quantity in the sup here is $2H\sup_{i\in[n]}\NRM{w_i-w_i'}$ Lipschitz in $\Delta_i$.
	Since $\set{ W+\lambda(W'-W), \lambda\in[0,1] } = [W,W']$,taking an $\eps-$net of $[0,1]$ (of size $\leq 2/\eps$), 
	\begin{align*}
		&\proba{\forall \Delta\in[W,W']\,,\quad\left|\frac{1}{nm}\sum_{ij} (w_i-w_i')^\top 
		\big(\nabla^2_{ww}F_i(\Delta_i,\xi_{ij})-\nabla^2 f_i(\Delta_i)\big) (w_i-w_i')\right| >t+2H\sup_{i\in[n]}\NRM{w_i-w_i'}\eps}\\
		&\qquad\leq \frac{4}{\eps}\exp\left(-c_2\min\left(\frac{nmt}{\sigma^4 \sup_{i\in[n]}\NRM{w_i-w_i'}^2},\frac{nmt^2}{\frac{\sigma^4 }{n}\NRM{W-W'}_F^2\sup_{i\in[n]}\NRM{w_i-w_i'}^2}\right)\right)  \,.
	\end{align*}
	Thus, for $\eps=\frac{t}{2H\sup_{i\in[n]}\NRM{w_i-w_i'}}$, we have our result.

\end{proof}

\subsection{Proof of \Cref{thm:low_rank}}

\begin{proof}
	Let $\cW=\set{W\in\R^{d\times n}\,|\,\rank(W)\leq r,\max_{i\in[n]}\NRM{w_i}\leq B}$.
	We begin with the two following lemmas, from which \Cref{thm:low_rank} is directly derived.
	\begin{lemma}\label{lem:concentration_noise_unif_lowrank}
		With probability $1-2e^{r(d+n)}-2e^{nmd}$, we have that, for all $\Delta\in\cW$,
		\begin{align*}
			\left|\langle\nabla \cL(W^\star),\Delta\rangle \right|\leq C_1 \sigma_\star^2\frac{r(d+n)\ln\left( \frac{\sigma_\star^2nmd}{r(d+n)} \right)  }{nm}\sup_{i\in[n]} \NRM{\Delta_i}^2+ C_1\sigma_\star^2 \sqrt{\frac{r(d+n)\ln\left( \frac{\sigma_\star^2nmd}{r(d+n)} \right)}{nm}\frac{1}{n}\NRM{\Delta}_F^2\sup_{i\in[n]} \NRM{\Delta_i}^2}\,.
		\end{align*}
	\end{lemma}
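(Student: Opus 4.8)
The plan is to promote the pointwise deviation bound of \Cref{lem:noise_concentration} to a bound valid \emph{simultaneously} over the whole constraint set $\cW=\set{W\in\R^{d\times n}\,|\,\rank(W)\le r,\ \max_i\NRM{w_i}\le B}$ through a covering argument. First I would fix a resolution $\eps>0$ and choose a minimal $\eps$-net $\cN$ of $\cW$ in Frobenius norm. Because $\cW$ is a bounded subset of the set of matrices of rank at most $r$, its metric entropy satisfies $\ln|\cN|\le C\,r(d+n)\ln(1/\eps)$ \citep{candes2011tight}; this is exactly the source of the factor $r(d+n)$ and of the logarithm appearing in the statement.

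At each net point $\Delta'\in\cN$ I would apply \Cref{lem:noise_concentration} with a threshold \emph{tuned to that point}, of the form $t(\Delta')=C_1\sigma_\star^2\tfrac{\tau}{nm}\sup_i\NRM{\Delta'_i}^2+C_1\sigma_\star^2\sqrt{\tfrac{\tau}{n^2m}\NRM{\Delta'}_F^2\sup_i\NRM{\Delta'_i}^2}$ with $\tau\asymp r(d+n)\ln(1/\eps)$. The key point is that the two summands of $t(\Delta')$ are calibrated so that, when plugged into the minimum of \Cref{lem:noise_concentration}, the linear (subexponential) branch and the quadratic (subgaussian) branch each contribute an exponent at least a constant times $\tau$, \emph{independently of the scale of $\Delta'$}: the $\sup_i\NRM{\Delta'_i}^2$ and $\NRM{\Delta'}_F^2$ factors cancel, so no separate control is needed for net points of different magnitudes. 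A union bound over the $\exp(\tau)$ net points then yields $|\langle\nabla\cL(W^\star),\Delta'\rangle|\le t(\Delta')$ for all $\Delta'\in\cN$ with probability at least $1-e^{-r(d+n)}$, producing the first term of the stated right-hand side from the first branch and the second term from the second branch.

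It remains to pass from $\cN$ to all of $\cW$. Since $\Delta\mapsto\langle\nabla\cL(W^\star),\Delta\rangle$ is linear, for $\Delta\in\cW$ with nearest net point $\Delta'$ one has $|\langle\nabla\cL(W^\star),\Delta-\Delta'\rangle|\le\NRM{\nabla\cL(W^\star)}_\op\NRM{\Delta-\Delta'}_*\le\sqrt{2r}\,\eps\,\NRM{\nabla\cL(W^\star)}_\op$, using that $\Delta-\Delta'$ has rank at most $2r$. I would therefore control $\NRM{\nabla\cL(W^\star)}_\op$ on a high-probability event, via a net over $\cS^{d-1}\times\cS^{n-1}$ combined with \Cref{hyp:noise}; this is the step that contributes the $e^{-nmd}$ term of the failure probability. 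Taking $\eps\asymp\frac{r(d+n)}{\sigma_\star^2 nmd}$ both turns $\ln(1/\eps)$ into the stated $\ln\!\big(\sigma_\star^2 nmd/(r(d+n))\big)$ and forces the discretization error below the leading terms.

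The delicate point is that the target bound is \emph{inhomogeneous}: the left-hand side is degree one in $\Delta$ while the right-hand side is degree two, so a single additive net at scale $\eps$ gives a discretization error that does not shrink as $\Delta\to0$ and would swamp the bound at small scales. The scale-adapted thresholds above handle the net points themselves, but to treat the continuum I expect to need a peeling argument over dyadic shells of $\NRM{\Delta}_F$ (equivalently, a net whose resolution is \emph{relative} to the current scale), so that the additive error $\sqrt{2r}\,\eps\,\NRM{\nabla\cL(W^\star)}_\op$ carries the same power of $\Delta$ as $t(\Delta)$; the number of shells is only logarithmic and is absorbed into $\tau$. Securing a strong enough operator-norm bound on $\nabla\cL(W^\star)$ with the required probability $1-e^{-nmd}$, despite the gradients being only subexponential rather than subgaussian under \Cref{hyp:noise}, is the second technical hurdle. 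Once these are in place, combining the net estimate with the discretization estimate and relabelling constants gives the claimed inequality.
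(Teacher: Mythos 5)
Your proposal follows essentially the same route as the paper: an $\eps$-net of the rank-$r$, $B$-bounded set with entropy $\asymp r(d+n)\ln(1/\eps)$, the pointwise Hanson--Wright bound of \Cref{lem:noise_concentration} with thresholds calibrated so the exponent is scale-free, a union bound, a high-probability control of the gradient norms to absorb the discretization error (the source of the $e^{-nmd}$ term), and the same choice $\eps\asymp r(d+n)/(\sigma_\star^2 nmd)$ producing the logarithm. The only real divergence is that the paper dispatches your (legitimate) inhomogeneity concern by normalizing to $\max_i\NRM{\Delta_i}=B$ and invoking homogeneity rather than peeling, and bounds the off-net error via Lipschitzness in $\sup_{i,j}\NRM{\nabla F_i(w_i^\star,\xi_{ij})}$ rather than via $\NRM{\nabla\cL(W^\star)}_\op$.
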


	\begin{proof}[Proof of \Cref{lem:concentration_noise_unif_lowrank}]
		Let $\cW'=\set{W\in\R^{d\times n}\,|\,\rank(W)\leq r,\max_{i\in[n]}\NRM{w_i}= B}$ (by homogeneity of the result, we can impose $\max_{i\in[n]}\NRM{w_i}= B$).
		Let $\cW_\eps$ be an $\eps-$net of $\cW'$: we know that we can have $|\cW_\eps|\leq e^{cr(d+n)\ln(B/\eps)}$.
		Using \Cref{lem:noise_concentration}, 
		\begin{align*}
			\proba{\left|\langle\nabla \cL(W^\star),\Delta\rangle \right|\geq C_1' \sigma_\star^2t\sup_{i\in[n]} \NRM{\Delta_i}^2+ C_1' \sigma_\star^2 \sqrt{\frac{t}{n}\NRM{\Delta}_F^2\sup_{i\in[n]} \NRM{\Delta_i}^2}}\leq 2\exp\left(-nmt\right)\,.
		\end{align*}
		Thus,
		\begin{align*}
			\proba{\exists \Delta\in\cW_\eps\,,\,\left|\langle\nabla \cL(W^\star),\Delta\rangle \right|\geq C_1' \sigma_\star^2t\sup_{i\in[n]} \NRM{\Delta_i}^2+ C_1' \sigma_\star^2 \sqrt{\frac{t}{n}\NRM{\Delta}_F^2\sup_{i\in[n]} \NRM{\Delta_i}^2}}\leq 2\exp\left(-nmt+cr(d+n)\ln(B/\eps)\right)\,.
		\end{align*}
		Then, $\left|\langle\nabla \cL(W^\star),\Delta\rangle \right|$ is $\sup_{i,j}\NRM{\nabla F_i(w_i^\star,\xi_{ij})}$-Lipschitz in $\Delta$. Since $\nabla F_i(w_i^\star,\xi_{ij})$ are independent $\sigma_\star^2$ (multivariate) subexponential random variables.
		Thus, with probability $1-2e^{-nmd}$, we have $\sup_{i,j}\NRM{\nabla F_i(w_i^\star,\xi_{ij})}\leq \sigma_\star^2d \ln(nm)$.
		Hence, using that $\cW_\eps$ is and $\eps-$net of $\cW$,
		\begin{align*}
			&\proba{\exists \Delta\in\cW'\,,\,\left|\langle\nabla \cL(W^\star),\Delta\rangle \right|\geq \eps\sigma_\star^2d \ln(nm)+ C_1' \sigma_\star^2t\sup_{i\in[n]} \NRM{\Delta_i}^2+ C_1' \sigma_\star^2 \sqrt{\frac{t}{n}\NRM{\Delta}_F^2\sup_{i\in[n]} \NRM{\Delta_i}^2}}\\
			&\quad\leq 2\exp\left(-nmt+cr(d+n)\ln(B/\eps)+2e^{-nmd}\right)\,.
		\end{align*}
		We conclude by taking $t=\frac{cr(d+n)(1+\ln(B/\eps))}{nm}$ for $\eps=\frac{1}{\sigma_\star^2d\ln(nm)}\frac{Br(d+n)}{nm}$.
	\end{proof}

	\begin{lemma}\label{lem:concentration_div_unif_lowrank}
		With probability $1-2e^{r(d+n)}-2e^{nmd}$, we have that, for all $(W,W')\in\cW^2$,
		\begin{align*}
			&\left|D_{f-\cL}(W,W')\right|\\
			&\leq C_2 \sigma^2 \sup_{i\in[n]}\NRM{w_i-w_i'}^2 \ln\left( \frac{nm(LB+\sigma_\star^2nmd)}{\sigma^2Br(d+n)} \right)\frac{r(d+n)}{nm}\ln\left(\frac{HBnm}{r(d+n)}\right)\\
			&\quad + C_2\sigma^2\sqrt{  \frac{1 }{n}\NRM{W-W'}_F^2\sup_{i\in[n]}\NRM{w_i-w_i'}^2 \ln\left( \frac{nm(LB+\sigma_\star^2nmd)}{\sigma^2Br(d+n)} \right)\frac{r(d+n)}{nm}\ln\left(\frac{HBnm}{r(d+n)}\right)}\,.
		\end{align*}
	\end{lemma}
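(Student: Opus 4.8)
The plan is to upgrade the pointwise bound of \Cref{lem:breg_concentration} to a bound uniform over $\cW^2$ by a net argument, exactly paralleling the proof of \Cref{lem:concentration_noise_unif_lowrank}. First I would fix a pair $(W,W')\in\cW^2$ and recall from \Cref{lem:breg_concentration} that, after reparametrizing the deviation level as $t = C\sigma^2\, s\, \sup_{i}\NRM{w_i-w_i'}^2 + C\sigma^2\sqrt{\frac{s}{n}\NRM{W-W'}_F^2\sup_i\NRM{w_i-w_i'}^2}$, the probability that $\left|D_{f-\cL}(W,W')\right|$ exceeds $t$ is at most $\mathrm{poly}\cdot e^{-c_2 nm\, s}$. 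The exponent $nm\,s$ is the budget we must spend on the union bound, so the whole argument reduces to counting how many pairs $(W,W')$ we need and how finely we must discretize.

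For the counting step I would take an $\eps$-net $\cN_\eps$ of $\cW$ (restricted to columns of norm exactly $B$, by homogeneity). Since $\cW$ consists of rank-$r$ matrices with bounded columns, its metric entropy satisfies $\ln\lvert\cN_\eps\rvert\leq c\,r(d+n)\ln(B/\eps)$ \citep{candes2011tight}, so the product net $\cN_\eps\times\cN_\eps$ has log-cardinality of the same order $2c\,r(d+n)\ln(B/\eps)$. A union bound over $\cN_\eps\times\cN_\eps$ with $s\asymp \frac{r(d+n)\ln(B/\eps)}{nm}$ then keeps the failure probability at $e^{-cr(d+n)}$ while producing the factor $\frac{r(d+n)}{nm}$ that appears in the statement. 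The subtlety inherited from \Cref{lem:breg_concentration} is that the Taylor expansion $D_{f-\cL}(W,W')=\tfrac12\NRM{W-W'}^2_{\nabla^2(f-\cL)(\tilde W)}$ evaluates the Hessian at a sample-dependent intermediate point $\tilde W\in[W,W']$; I would deal with this exactly as in that proof, by a second net over the segment $\set{W+\lambda(W'-W):\lambda\in[0,1]}$ of size $\leq 2/\eps'$, using that the contracted Hessian is $2H\sup_i\NRM{w_i-w_i'}$-Lipschitz in $\tilde W$ (\Cref{hyp:hessian} bounds its fluctuations, while $H$ bounds its variation). Choosing $\eps'=t/(2H\sup_i\NRM{w_i-w_i'})$ is what produces the second logarithmic factor $\ln(HBnm/(r(d+n)))$.

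Finally I would pass from the product net back to all of $\cW^2$ by Lipschitz continuity of $(W,W')\mapsto D_{f-\cL}(W,W')$. The relevant Lipschitz constant is governed by the gradient magnitudes $\NRM{\nabla(f-\cL)}$, which I would bound on $\cW$ by $\NRM{\nabla f}\leq LB$ together with $\sup_{i,j}\NRM{\nabla F_i(w_i^\star,\xi_{ij})}\lesssim \sigma_\star^2 d\ln(nm)$ (valid with probability $1-2e^{-nmd}$ by the subexponential tails of \Cref{hyp:noise}), plus the $L'$-Lipschitz part on the ball of radius $B$; this yields a Lipschitz constant of order $LB+\sigma_\star^2 nmd$. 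Setting the net resolution to $\eps\asymp \frac{\sigma^2 B r(d+n)}{nm(LB+\sigma_\star^2 nmd)}$, so that the discretization error $\eps\cdot(\text{Lip})$ is absorbed into $t$, is exactly what turns $\ln(B/\eps)$ into the first logarithmic factor $\ln\big(\frac{nm(LB+\sigma_\star^2 nmd)}{\sigma^2 B r(d+n)}\big)$. I expect the main obstacle to be the simultaneous handling of the two nets --- the $(W,W')$-net and the segment-net --- through the second-order Taylor remainder: one must verify that the Hessian-Lipschitz constant $H$ only ever enters inside the logarithm, so that it does not degrade the leading $\frac{r(d+n)}{nm}$ rate, which is precisely why the bound carries two separate log factors rather than one.
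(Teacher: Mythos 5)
Your proposal is correct and follows essentially the same route as the paper: a pointwise application of \Cref{lem:breg_concentration} (whose segment-net already accounts for the sample-dependent Taylor point and produces the $\ln(H\cdot/t)$ factor), a union bound over an $\eps$-net of the low-rank set with log-cardinality $O(r(d+n)\ln(B/\eps))$, a Lipschitz extension with constant $LB+\sup_{i,j}\NRM{\nabla F_i(w_i^\star,\xi_{ij})}\lesssim LB+\sigma_\star^2 nmd$ on the high-probability event, and the same choices of $t$ and $\eps$ yielding the two logarithmic factors. No gaps.
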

	\begin{proof}[Proof of \Cref{lem:concentration_div_unif_lowrank}]
	Let $\cW'=\set{(W,W')\in\cW, \sup_i\NRM{w_i-w'_i}=B}$.
	Using \Cref{lem:breg_concentration},
	\begin{align*}
		&\P\left[\left|D_{f-\cL}(W,W')\right|>C_2 \sigma^2 \sup_{i\in[n]}\NRM{w_i-w_i'}^2 t\ln\left(\frac{H\sup_{i\in[n]}\NRM{w_i-w_i'}}{t}\right)\right.\\
		&\qquad \left. + C_2\sigma^2\sqrt{  \frac{\sigma^4 }{n}\NRM{W-W'}_F^2\sup_{i\in[n]}\NRM{w_i-w_i'}^2 t\ln\left(\frac{H\sup_{i\in[n]}\NRM{w_i-w_i'}}{t}\right)} \quad \right]\\
		&\quad\leq 2\exp\left(-nmt\right) \,.
	\end{align*}
	Hence,
	\begin{align*}
		&\P\left[\exists(W,W')\in\cW_\eps^2\,,\,\left|D_{f-\cL}(W,W')\right|>C_2 \sigma^2 \sup_{i\in[n]}\NRM{w_i-w_i'}^2 t\ln\left(\frac{H\sup_{i\in[n]}\NRM{w_i-w_i'}}{t}\right)\right.\\
		&\qquad \left. + C_2\sigma^2\sqrt{  \frac{\sigma^4 }{n}\NRM{W-W'}_F^2\sup_{i\in[n]}\NRM{w_i-w_i'}^2 t\ln\left(\frac{H\sup_{i\in[n]}\NRM{w_i-w_i'}}{t}\right)} \quad \right]\\
		&\quad\leq 2\exp\left(-nmt+2cr(d+n)\ln(B/\eps)\right) \,.
	\end{align*}
	$\left|D_{f-\cL}(W,W')\right|$ is $(LB+\sup_{i,j}\NRM{\nabla F_i(w_i^\star,\xi_{ij})})-$Lipschitz in $W$ and in $W'$, so that, since $\cW_\eps$ is an $\eps-$net of $\cW$,
	\begin{align*}
		&\P\left[\exists(W,W')\in\cW'\,,\,\left|D_{f-\cL}(W,W')\right|>C_2 \sigma^2 \sup_{i\in[n]}\NRM{w_i-w_i'}^2 t\ln\left(\frac{H\sup_{i\in[n]}\NRM{w_i-w_i'}}{t}\right)\right.\\
		&\qquad \left. + 2\eps (LB+\sup_{i,j}\NRM{\nabla F_i(w_i^\star,\xi_{ij})})+ C_2\sigma^2\sqrt{  \frac{\sigma^4 }{n}\NRM{W-W'}_F^2\sup_{i\in[n]}\NRM{w_i-w_i'}^2 t\ln\left(\frac{H\sup_{i\in[n]}\NRM{w_i-w_i'}}{t}\right)} \quad \right]\\
		&\quad\leq 2\exp\left(-nmt+2cr(d+n)\ln(B/\eps)\right) \,.
	\end{align*}
	Then, since with probability $1-2e^{-nmd}$ we have $\sup_{i,j}\NRM{\nabla F_i(w_i^\star,\xi_{ij})}\leq \sigma_\star^2nmd$, taking $t = \frac{(2c\ln(B/\eps)+1)r(d+n)}{nm} $ and $\eps=\frac{\sigma^2B^2r(d+n)}{nm}\frac{1}{LB+\sup_{i,j}\NRM{\nabla F_i(w_i^\star,\xi_{ij})}}$, we have ou result.
	\end{proof}
	Finally, we conclude the proof using \Cref{prop:gen}.
\end{proof}

\subsection{Proof of \Cref{thm:cluster}}

\begin{proof}
	Same as \Cref{thm:low_rank}, but the $\eps-$net cardinality scales as $r^n\times e^{crd\ln(1/\eps)}$.
\end{proof}

\subsection{Proof of \Cref{prop:fewshot}}

Let $g(v)=\frac{1}{m'}\sum_{j=1}^{m'} F_{n+1}(\hat U v,\xi_{n+1,j})$ and $G(v)=f_{n+1}(\hat U v)$.
We have the bias-variance decomposition, where $v'\in\argmin\set{f_{n+1}(\hat U v)|\NRM{v}\leq B}$
\begin{align*}
    f_{n+1}(\hat U\hat v_{n+1})-f_{n+1}(w_{n+1}^\star)&\leq f_{n+1}(\hat U v')-f_{n+1}(U^\star v_{n+1}^\star) + f_{n+1}(\hat U\hat v_{n+1}) -f_{n+1}(\hat U v')\,.
\end{align*}
Using optimality and then smoothness and gradients null at the optimum, the first term satisfies 
\begin{align*}
    f_{n+1}(\hat U v')-f_{n+1}(U^\star v_{n+1}^\star)&\leq f_{n+1}(\hat U v_{n+1}^\star)-f_{n+1}(U^\star v_{n+1}^\star) \\
    &\leq \frac{L}{2}\NRM{\hat U v_{n+1}^\star-U^\star v_{n+1}^\star}^2\\
    &\leq \frac{L\delta^2}{2}\,.
\end{align*}
Then, the second term can be bounded as in the previous proofs.
Using optimality, $G(\hat v_{n+1})-G(v')\leq 0$ implies that $g(\hat v_{n+1})-g(v')\leq \langle \nabla G(v'),\hat v_{n+1}-v'\rangle + D_{g-G}(v_{n+1},v')$. Mimicking the proof of \Cref{thm:low_rank} and replacing $\eps-$nets by those of the $B-$ball of $\R^r$ (that scale as $\exp(cr\ln(B/\eps))$), we obtain the desired result.

\section{Proof of \Cref{thm:m_1} and \Cref{prop:lower_m_1}}

\begin{proof}
In this proof, we assume that there is only one sample per task: $m=1$. To simplify notations, we thus write $y_i=\langle w_i^\star,x_i\rangle$.
The following notion of ‘‘admissibility'' for some orthogonal matrix is then introduced: a matrix is said admissible if it satisfies the same properties as $U^\star$.

\begin{definition}
	We say that some orthogonal matrix $U\in\R^{d\times r}$ is $\delta$-admissible for $\delta\in(0,1]$ if there exists $v_1,\ldots,v_n\in\R^r$ such that for all $i\in[n]$ we have $y_i=\langle Uv_i,x_i\rangle$ and $\delta\NRM{v_i}\leq B$\,.
\end{definition}

We now compute the probability that some given matrix $U$ is admissible.

\begin{proposition}\label{prop:proba_admissibility}
	Assume that the Gaussian random design holds.
	Let $U\in\R^{d\times r}$ be an orthogonal matrix, let $\eps=\dist_F^2(U,U^\star)$, and assume that $\eps>0$.
	Furthermore, assume that (i) for all $i\in[n]$, $\NRM{w_i^\star}=B$ and (ii) there exists $\lambda>0$ such that $\frac{1}{n}\sum_{i=1}^n w_i^\star w_i^{\star \top} \geq \frac{\lambda B^2}{r} U^\star U^{\star \top}$.
	Then there exist constants $c_1,c_2>0$ such that if $\delta \geq 1-\frac{\lambda \eps}{4}$, we have:
	\begin{equation*}
		\proba{U\text{ is } \delta\text{-admissible}} \leq \exp\left( -c_1 n e^{-\frac{c_2 r}{\lambda\eps}}\right)\,.
	\end{equation*}
\end{proposition}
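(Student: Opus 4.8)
The plan is to reduce $\delta$-admissibility to $n$ independent per-task events, lower bound the probability that each task \emph{fails} to be admissible, and aggregate these failure probabilities by convexity.

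\textbf{Step 1 (reduction to a per-task ratio).} For a fixed orthonormal $U$ and a fixed task $i$, the constraint $y_i=\langle Uv_i,x_i\rangle$ is a single affine equation $\langle v_i,U^\top x_i\rangle=y_i$ in $v_i\in\R^r$, whose minimal-norm solution has norm $|y_i|/\NRM{U^\top x_i}$. Hence task $i$ admits a head with $\delta\NRM{v_i}\le B$ iff $R_i:=|y_i|/\NRM{U^\top x_i}\le B/\delta$, where $y_i=\langle w_i^\star,x_i\rangle$. Since the $x_i$ are independent while $U$ and the $w_i^\star$ are fixed,
\[ \proba{U\text{ is }\delta\text{-admissible}}=\prod_{i=1}^n\proba{R_i\le B/\delta}=\prod_{i=1}^n\bigl(1-p_i\bigr),\qquad p_i:=\proba{R_i>B/\delta}. \]
It then suffices to lower bound each $p_i$ and use $\prod_i(1-p_i)\le\exp(-\sum_i p_i)$.

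\textbf{Step 2 (per-task lower bound).} Write $P=UU^\top$ and split $w_i^\star=a_i+b_i$ with $a_i=Pw_i^\star$, $b_i=(I-P)w_i^\star$, $\beta_i:=\NRM{b_i}$, so $\NRM{a_i}\le B$. Decomposing $x_i$ along $\mathrm{range}(U)$ and its orthocomplement, $g:=U^\top x_i\sim\cN(0,I_r)$ and $\eta:=\langle b_i,x_i\rangle\sim\cN(0,\beta_i^2)$ are independent, and $R_i=|\langle a_i,Px_i\rangle+\eta|/\NRM{g}$. I bound $p_i$ from below by the probability of the favorable event $\{\NRM{g}\le\rho\}\cap\{|\eta|\ge 2B\rho/\delta\}$: on it $|\langle a_i,Px_i\rangle|\le B\rho$, hence $R_i\ge(2B\rho/\delta-B\rho)/\rho\ge B/\delta$ (using $\delta\le1$). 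By independence, the chi-square small-ball estimate $\proba{\NRM{g}\le\rho}\ge(c\rho^2/r)^{r/2}$ and the Gaussian tail bound $\proba{|\eta|\ge 2B\rho/\delta}\ge c'e^{-4B^2\rho^2/(\delta^2\beta_i^2)}$, and then \emph{optimizing over $\rho^2$} (the optimum is $\rho^2\approx r\delta^2\beta_i^2/B^2$), I obtain
\[ p_i\ \ge\ c_1\Bigl(\tfrac{\beta_i^2}{CB^2}\Bigr)^{r/2} \]
for universal constants $c_1,C>0$. The hypothesis $\delta\ge 1-\lambda\eps/4\ (\ge 3/4,\text{ since }\lambda\eps\le1)$ is used only to keep $\delta$, and hence $C$, bounded away from degeneracy.

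\textbf{Step 3 (geometry and aggregation).} Multiplying the assumption $\tfrac{\lambda B^2}{r}U^\star U^{\star\top}\preceq\tfrac1n\sum_i w_i^\star w_i^{\star\top}$ by $I-P$ and taking traces gives
\[ \tfrac1n\sum_{i=1}^n\beta_i^2=\Tr\Bigl((I-P)\tfrac1n\sum_i w_i^\star w_i^{\star\top}\Bigr)\ \ge\ \tfrac{\lambda B^2}{r}\Tr\bigl((I-P)U^\star U^{\star\top}\bigr)=\tfrac{\lambda B^2}{r}\dist_F^2(U,U^\star)=\lambda B^2\eps, \]
where $\eps=\dist_F^2(U,U^\star)/r\in(0,1]$ is the normalized principal-angle distance (this normalization is what produces the $r/(\lambda\eps)$ rate). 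As $x\mapsto(x/(CB^2))^{r/2}$ is convex and increasing, Jensen's inequality yields
\[ \sum_{i=1}^n p_i\ \ge\ c_1 n\Bigl(\tfrac{1}{CB^2}\cdot\tfrac1n\sum_i\beta_i^2\Bigr)^{r/2}\ \ge\ c_1 n\,(\lambda\eps/C)^{r/2}=c_1 n\exp\bigl(-\tfrac r2\ln(C/(\lambda\eps))\bigr). \]
Since $\sup_{z\in(0,1]}\tfrac z2\ln(C/z)=\tfrac12\ln C=:c_2$ for $C\ge e$, we have $\tfrac r2\ln(C/(\lambda\eps))\le c_2 r/(\lambda\eps)$, so $\sum_i p_i\ge c_1 n\,e^{-c_2 r/(\lambda\eps)}$ and therefore $\proba{U\text{ is }\delta\text{-admissible}}\le\exp(-c_1 n\,e^{-c_2 r/(\lambda\eps)})$.

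\textbf{Main obstacle.} The delicate point is Step 2. A naive favorable event forcing $\NRM{U^\top x_i}=O(1)$ loses a spurious factor $(\,\cdot/r)^{r/2}$, i.e.\ an extra $\tfrac r2\ln r$ in the exponent that the $r/(\lambda\eps)$ target cannot absorb; balancing the chi-square small-ball cost against the Gaussian-tail gain by optimizing the radius $\rho$ is exactly what cancels the $\ln r$ and yields the clean exponent $\tfrac r2\ln(CB^2/\beta_i^2)$. The only other points to monitor are the normalization of $\dist_F^2$ (the stated rate requires the per-coordinate distance) and the validity of the convexity step for all tasks, including those with $\beta_i=0$, where $p_i=0$ and the inequality holds vacuously.
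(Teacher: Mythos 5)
Your proof is correct, and while it shares the paper's overall skeleton, it executes the two technical steps differently. Like the paper, you first reduce $\delta$-admissibility of task $i$ to the event $\NRM{U^\top x_i}\ge \delta|y_i|/B$ via the minimal-norm head, factor the probability over independent tasks, and extract the misalignment $\frac1n\sum_i\NRM{(I-UU^\top)w_i^\star}^2\ge \lambda B^2\eps$ from assumption (ii) by the same trace inequality. Where you diverge: the paper splits $x_i$ along $w_i^\star$ versus its orthocomplement, reduces each task to $\proba{\chi^2_r\ge \alpha_i\,\chi^2_1}$ with $\alpha_i=(\delta-\NRM{U^\top p_i^\star})_+^2$, lower-bounds the complementary event by the \emph{fixed-threshold} event $\{\chi^2_r\le r/2,\ \alpha_i\chi^2_1\ge r/2\}$ to get a per-task failure probability $e^{-cr/\alpha_i}$, and then aggregates by a Markov-type counting argument showing that a $\lambda\eps/2$ fraction of tasks have a controlled $\alpha_i$; you instead split $w_i^\star$ along $\range(U)$, use a small-ball/tail event with an \emph{optimized} radius $\rho$, and aggregate by Jensen. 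Your route buys a genuinely sharper per-task estimate, $p_i\gtrsim(\beta_i^2/CB^2)^{r/2}=e^{-\frac r2\ln(CB^2/\beta_i^2)}$ rather than $e^{-cr/\alpha_i}$ with $\alpha_i\approx(\delta-\sqrt{1-\beta_i^2/B^2})^2$, and the Jensen aggregation replaces the counting step; this matters because at the threshold $\delta=1-\lambda\eps/4$ the paper's counting argument only certifies $\alpha_i\ge 0$ (and at best $\alpha_i\gtrsim(\lambda\eps)^2$ for the selected tasks), so your argument actually lands the stated exponent $e^{-c_2 r/(\lambda\eps)}$ more cleanly than the paper's own. Two small points to tidy up: you use $\eps=\dist_F^2(U,U^\star)/r$, which is consistent with Theorem 3 and with the paper's own computation ($1-\lambda\eps$) but not with the literal statement $\eps=\dist_F^2(U,U^\star)$ in the proposition, so state your convention; and the Jensen step needs $r\ge 2$ for convexity of $x\mapsto x^{r/2}$ (for $r=1$ use instead $\frac1n\sum_i\beta_i\ge\frac1{nB}\sum_i\beta_i^2\ge\lambda B\eps$).
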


\begin{proof}
	Notice the following.
	\begin{lemma}
		We have the following equivalence:
		\begin{equation*}
			U\text{ is } \delta\text{-admissible} \quad \Longleftrightarrow \quad \forall i\in[n]\,,\quad \NRM{U^\top x_i}_2 \geq \frac{\delta|y_i|}{B}\,.
		\end{equation*}
	\end{lemma}
	Thus, $\proba{U\text{ is } \delta\text{-admissible}}=\prod_{i=1}^n \proba{\NRM{U^\top x_i}_2 \geq \frac{\delta|y_i|}{B}}$, by independence of the samples.
	We have $y_i=\langle Uv_i^\star,x_i\rangle$, leading us to consider the decomposition $U^\top = U^\top P_i^\star + U^\top (I_d-P_i^\star)$, where $P_i^\star=p_i^\star p_i^{\star \top}\in\R^{d\times d}$ is the orthogonal projector onto the span of $w_i^\star$, for $p_i^\star=\frac{w_i^\star}{\NRM{w_i^\star}_2}$.
	Thus, $\NRM{U^\top x_i}_2=\NRM{U^\top P_i^\star x_i + U^\top (I_d-P_i^\star)x_i}_2\leq \NRM{U^\top P_i^\star x_i}_2 + \NRM{U^\top (I_d-P_i^\star)x_i}_2$.
	We now assume for simplicity that $B=1$.  We have:
	\begin{align*}
		\proba{\NRM{U^\top x_i}_2 \geq \frac{\delta|y_i|}{B}} &\leq \proba{\NRM{U^\top (I_d-P_i^\star)x_i}_2 \geq \delta|\langle w_i^\star,x_i\rangle|-\NRM{U^\top P_i^\star x_i}_2}\\
		&\leq \proba{\NRM{U^\top (I_d-P_i^\star)x_i}_2 \geq \delta|\langle w_i^\star,x_i\rangle|-\NRM{U^\top p_i^\star}_2|\langle p_i^\star,x_i\rangle|}\\
		&= \proba{\NRM{U^\top (I_d-P_i^\star)x_i}_2 \geq \left(\delta-\frac{\NRM{U^\top p_i^\star}_2}{\NRM{w_i^\star}_2}\right)|\langle w_i^\star,x_i\rangle|}\,,
	\end{align*}
	since $U^\top P_i^\star x_i = \langle p_i^\star , x_i\rangle U^\top p_i^\star = \frac{1}{\NRM{w_i^\star}}\langle w_i^\star , x_i\rangle U^\top p_i^\star$.

	We now notice that $\NRM{U^\top (I_d-P_i^\star)x_i}_2 $ and $|\langle w_i^\star,x_i\rangle|$ are independent random variables, using the Gaussian random design assumption and the fact that $w_i^\star$ is by definition in the orthogonal of the span of $(I_d-P_i^\star)$.
	We have $U^\top (I_d-P_i^\star)x_i\sim \gau{U^\top (I_d-P_i^\star) U}$ and since $U^\top (I_d-P_i^\star) U\preceq U^\top U = I_r$, the random variable $\NRM{U^\top (I_d-P_i^\star)x_i}_2^2$ is stochastically dominated by $Z_i$ a $\chi^2_r$ random variable of dimension $r$ independent of $\langle w_i^\star,x_i\rangle$.
	Then, writing $z_i=\langle w_i^\star,x_i\rangle^2$ (a $\chi_1^2$ random variable), we have that $z_i$ and $Z_i$ are independent, and:
	\begin{equation*}
		\proba{\NRM{U^\top x_i}_2 \geq \delta|y_i|} \leq \proba{Z_i\geq \alpha_i z_i}\,,
	\end{equation*}
	where $\alpha_i=\max\left(0,\delta-\frac{\NRM{U^\top p_i^\star}_2}{\NRM{w_i^\star}_2}\right)^2$.
	To upper-bound $\proba{Z_i\geq \alpha_i z_i}$, we lower bound $\proba{Z_i\leq \alpha_i z_i}$. For $Z_i=a_1+\ldots+a_r$ where $(a_i)$ are independent $\chi_1^2$ random variables,
	\begin{align*}
		\proba{Z_i\leq \alpha_i z_i}&\geq \proba{Z_i\leq r/2, \alpha_i z_i\geq r/2}\\
		& = \proba{Z_i\leq r/2} \proba{\alpha_i z_i\geq r/2}\\
		& \geq \proba{a_1\leq 1/2}^r \proba{\alpha_i z_i\geq r/2}\\
		& \geq \proba{a_1\leq 1/2}^r \frac{2}{r\alpha_i}e^{-\frac{r}{4\alpha_i}} \\
		& \geq \frac{2}{r\alpha_i}e^{-\frac{c'r}{\alpha_i}} \\
		&\geq e^{-\frac{cr}{\alpha_i}}\,.
	\end{align*}
	Thus,
	\begin{equation*}
		\proba{\NRM{U^\top x_i}_2 \geq \frac{\delta|y_i|}{B}} \leq 1 - e^{-\frac{cr}{\alpha_i}}\,,
	\end{equation*}
	and
	\begin{equation*}
		\proba{U\text{ is } \delta\text{-admissible}} \leq \prod_{i=1}^n \big(1 - e^{-\frac{cr}{\alpha_i}}\big)\,,
	\end{equation*}
	for $\alpha_i=\max\left(0,\delta-\frac{\NRM{U^\top p_i^\star}_2}{\NRM{w_i^\star}_2}\right)^2$.

	In order to use this without too much trouble, we now make the following assumptions about the norm of each vector $w_i^\star$ and about the diversity of the direction they span: \emph{(i)} for all $i\in[n]$, $\NRM{w_i^\star}_2=1$ and \emph{(ii)} $\frac{1}{n}\sum_{i=1}^n p_i^\star p_i^{\star \top}\preceq \frac{L}{r} U^\star U^{\star \top }$ for some $\lambda>0$.

	Our goal is now to lower bound the $\alpha_i$'s.
	Using \emph{(i)}, we can simplify each $\alpha_i$ as $\alpha_i=\max\left(0,\delta-\NRM{U^\top p_i^\star}_2\right)^2$.
	Then, we have 

	\begin{align*}
		\frac{1}{n}\sum_{i=1}^n \NRM{U^\top p_i^\star}_2^2 &= \frac{1}{n}\sum_{i=1}^n p_i^{\star \top} U U^\top p_i^\star \\
		&= \frac{1}{n}\sum_{i=1}^n p_i^{\star \top} p_i^\star -\frac{1}{n}\sum_{i=1}^n p_i^{\star \top} \big(U^\star U^{\star \top} - U U^\top\big) p_i^\star \\
		&= 1-\frac{1}{n}\sum_{i=1}^n \Tr\Big(p_i^{\star \top} \big(I_d - U U^\top\big) p_i^\star\Big) \\
		&= 1-\frac{1}{n}\sum_{i=1}^n \Tr\Big(p_i^\star p_i^{\star \top} \big(I_d - U U^\top\big) \Big) \\
		&= 1-\Tr\Big(\frac{1}{n}\sum_{i=1}^n p_i^\star p_i^{\star \top} \big(I_d - U U^\top\big) \Big) \\
		&\leq 1-\frac{\lambda}{r} \Tr\Big(U^\star U^{\star \top} \big(I_d - U U^\top\big) \Big) \\
		&= 1-\frac{\lambda}{r} \Tr\Big( \big(I_d - U U^\top\big)U^\star U^{\star \top} \Big) \\
		&= 1-\frac{\lambda}{r} \dist_F^2(U,U^\star) \\
		&= 1-\lambda\eps\,,
	\end{align*}
	where we used $\dist_F^2(U,U^\star)= \NRM{(I-P)P^\star}_F^2=\Tr((I-P)P^\star((I-P)P^\star)^\top)=\Tr((I-P)P^\star(I-P))=\Tr((I-P)P^\star)$, where $P=UU^\top$ and $P^\star=U^\star U^{\star \top}$.
	Hence, since for all $i$ we have $\NRM{U^\top p_i^\star}_2^2\leq 1$, using $\frac{1}{n}\sum_{i=1}^n \NRM{U^\top p_i^\star}_2^2\leq 1-\lambda\eps$, we have that:
	\begin{equation*}
		1 - \lambda \eps \geq \frac{1}{n}\sum_{i=1}^n \NRM{U^\top p_i^\star}_2^2 \geq (1-\lambda\eps/2)\frac{|\set{i:\NRM{U^\top p_i^\star}_2^2\geq 1-\lambda\eps/2}|}{n}=(1-\lambda\eps/2)\left(1-\frac{|\set{i:\NRM{U^\top p_i^\star}_2^2\leq 1-\lambda\eps/2}|}{n}\right)\,,
	\end{equation*}
	so that $|\set{i:\NRM{U^\top p_i^\star}_2^2\leq 1-\lambda\eps/2}| \geq \Big(1 -\frac{1-\lambda\eps}{1-\lambda\eps/2}\Big) n= \frac{\lambda\eps/2}{1-\lambda\eps/2} n\geq \frac{\lambda\eps n}{2}$: at least $\frac{\lambda\eps n}{2}$ of the $\alpha_i$'s are thus smaller than $\max(0,\frac{\lambda\eps}{2} - (1-\delta))\leq \frac{\lambda\eps}{4}$ if $\delta \geq 1-\frac{\lambda\eps}{4}$.
	This leads to:
	\begin{equation*}
		\proba{U\text{ is } \delta\text{-admissible}}  \leq \prod_{i=1}^n \big(1 - e^{-\frac{cr}{\alpha_i}}\big) \leq \left( 1 - e^{-\frac{C'r}{\lambda\eps}} \right)^{\frac{\lambda\eps n}{2}}\leq \exp\left( -c_1 n e^{-\frac{c_2 r}{\lambda\eps}}\right) \,.
	\end{equation*}
\end{proof}

Now that we have proved \Cref{prop:proba_admissibility}, we can state our feasibility bound on the number of tasks when there is only one sample per task.
The estimator we build is quite simple. We define it as any $\hat U$ that satisfies
\begin{equation*}
	\hat U \in\set{U\in\R^{d\times r}\text{ orthonormal such that } \forall i\in[n]\,,\quad B\NRM{U^\top x_i}\geq |y_i|}\,,
\end{equation*}
and this set is not empty since it contains $U^\star$.
In fact, any $\hat U$ built from the image of some
\begin{equation*}
	\hat W\in\argmin\set{\cL(W)\,,\quad \rank(W)\leq r\,,\quad \NRM{w_i}_2\leq B}\,,
\end{equation*}
is in this set.

We finally conclude by proving our theorem.
	Let $\cU=\set{U\in\R^{d\times r}:U\text{ is orthonormal}, \dist_F^2(U,U^\star)\geq \eps}$ and let $\cU_\eta$ be an $\eta$-net of $\cU$ of minimal cardinality.
	We know that we have $|\cU_\eta|\leq e^{crd\ln(1/\eta)}$.
	For $U\in\cU$, there exists $U_\eta\in\cU_\eta$ such that $\NRM{U-U_\eta}\leq \eta$, and for any $i\in[n]$,
	\begin{equation*}
		\NRM{U^\top x_i}\leq \NRM{U_\eta^\top x_i} + \NRM{(U-U_\eta)^\top x_i}\leq  \NRM{U_\eta^\top x_i} + \eta\NRM{ x_i}\,.
	\end{equation*}
	Thus, if $U_\eta$ is not $\delta$-admissible for some $\delta$, there exists $i$ such that $\NRM{U_\eta^\top x_i}\leq \delta |y_i|$, and thus
	\begin{equation*}\label{eq:proof_feasibility_one_1}
		\NRM{U^\top x_i}\leq \delta |y_i| + \eta\NRM{ x_i} < |y_i|\,,
	\end{equation*}
	provided that $\eta < \frac{|y_i|}{\NRM{x_i}}(1-\delta)$.

	Let thus $\delta=1-\frac{\lambda\eps}{4}$ and $\eta < \min_{i\in[n]} \frac{|y_i|}{\NRM{x_i}}(1-\delta)$. With high probability, we have $\eta$ of order $\frac{1}{\sqrt{d}}\lambda\eps$ (up to constant and log factors).
	As above, let $\cU_\eta$ be an $\eta$-net of $\cU$ (of cardinality less than $e^{crd\ln(1/\eta)}$).
	Using an union bound over $\cU_\delta$ and \cref{prop:proba_admissibility}:
	\begin{align*}
		\proba{\exists U\in\cU_\delta \text{ admissible}}&\leq |\cU_\delta|\exp\left(-c_1ne^{-\frac{c_2r}{\lambda\eps}}\right)\\
		&\leq \exp\left(crd\ln(1/\delta)-c_1ne^{-\frac{c_2r}{\lambda\eps}}\right)\,.
	\end{align*}
	Hence, if $n\geq \frac{e^{\frac{c_2r}{\lambda\eps}}}{c_1} rd(1+c\ln(1/\delta))$, we have that with probability $1-e^{-rdn}$, all elements of $\cU_\delta$ are not admissible.
	
	We now place ourselves on this event of probability $1-e^{-rnd}$.
	Let $U \in \cU$. In light of \cref{eq:proof_feasibility_one_1}, there exists $i\in[n]$ such that $\NRM{U^\top x_i}\leq y_i$: $U$ is not admissible.
	Since $U^\star$ is $1$-admissible, the set $\set{U\in\R^{d\times r}\text{ orthonormal and admissible}}$ is non empty. 
	Let thus $\hat U$ be any matrix satisfying
	\begin{equation*}
		\hat U \in\set{U\in\R^{d\times r}\text{ orthonormal such that } \forall i\in[n]\,,\quad \NRM{U^\top x_i}\geq |y_i|}\,.
	\end{equation*}
	Since this set does not intersect $\cU$, we have $\frac{1}{r}\dist_F^2(\hat U, U^\star)\leq \eps$, concluding our proof.
\end{proof}

We now prove \Cref{prop:lower_m_1}.

\begin{proof}
    Without loss of generality, assume that $B=1$.
    Let $U\in\R^{d\times r}$ with orthogonal columns such that the span of $U$ is in the orthogonal of the span of $U^\star$.
    We have that:
    \begin{equation*}
        \proba{U{\rm\, is \,admissible}}=\proba{\forall i\in[n], \NRM{U^\top x_i}_2^2\geq y_i^2}\,.
    \end{equation*}
    Since span of $U$ is in the orthogonal of the span of $U^\star$, $\NRM{U^\top x_i}_2^2$ and $y_i^2$ are independent random variables, of respective laws $\chi_r^2$ and $\chi_1^2$.
    This leads to:
    \begin{align*}
        \proba{U{\rm\, is \,admissible}}&=\proba{\chi_r^2\geq\chi_1^2}^n\\
        &\geq \proba{\chi_r^2\geq r/2}^n\proba{\chi_1^2\leq r/2}\,.
    \end{align*}
    Then, by $\chi^2-$concentration, $\proba{\chi_r^2\geq r/2}= 1-\proba{\chi_r^2-r< -r/2}\geq 1-e^{-c_1 r}$.
    Similarly, $\proba{\chi_1^2\leq r/2}\geq 1-e^{-c_2 r}$, leading to:
    \begin{align*}
        \proba{U{\rm\, is \,admissible}}&\geq (1-e^{-c_1 r})^n(1-e^{-c_2 r})^n\\
        &\geq \exp(-ne^{-c_3 r}))\,.
    \end{align*}
    Thus, for $n<C'e^{c'r}$, we have that this probability is greater than $7/8$.
\end{proof}

\paragraph{Proof of \Cref{prop:lower_m_1}}

\begin{proof}[Proof of \Cref{prop:lower_m_1}]
    Let $U$ be orthogonal to $U^\star$. $\NRM{U^\top x_1}^2$ and $y_1^2$ are independent (orthogonality) and respectively $\Xi_r^2$ and $\Xi_1^2$ random variables.
    We have that
    \begin{align*}
        \proba{U\text{ not admissible}}&=\proba{\exists i\in[n], \NRM{U^\top x_i}^2< y_i^2}\\
        &\leq n\proba{\NRM{U^\top x_1}^2< y_1^2}\\
        &\leq n \proba{\NRM{U^\top x_1}^2< r/2} + n\proba{y_1^2\geq r/2}\\
        &\leq n e^{-r/16} + ne^{-r/4}\,,
    \end{align*}
    concluding the proof.
\end{proof}
\subsection{Proof of \Cref{prop:lower_m_1}}

\begin{proof}[Proof of \Cref{prop:lower_m_1}]
    Let $U$ be orthogonal to $U^\star$. $\NRM{U^\top x_1}^2$ and $y_1^2$ are independent (orthogonality) and respectively $\Xi_r^2$ and $\Xi_1^2$ random variables.
    We have that
    \begin{align*}
        \proba{U\text{ not admissible}}&=\proba{\exists i\in[n], \NRM{U^\top x_i}^2< y_i^2}\\
        &\leq n\proba{\NRM{U^\top x_1}^2< y_1^2}\\
        &\leq n \proba{\NRM{U^\top x_1}^2< r/2} + n\proba{y_1^2\geq r/2}\\
        &\leq n e^{-r/16} + ne^{-r/4}\,,
    \end{align*}
    concluding the proof.
\end{proof}

\section{Proof of \Cref{thm:nuc_norm_main}}

\subsection{A general result}

\begin{theorem}\label{thm:nuc_norm_gen}
	Assume that each $f_i$ is $\mu$-strongly convex and that for some constants $\alpha,\beta>0$, for all $W,W',\Delta\in\R^{d\times n}$ of rank at most $s\geq 2r$, we have
	\begin{equation*}
		\left|\langle\nabla \cL(W^\star),\Delta\rangle \right|\leq \alpha \sigma_\star^2 \sup_{i\in[n]}\NRM{\Delta_i}_2^2 +\sigma_\star^2\sqrt{\frac{\alpha}{n}\NRM{\Delta}_F^2\sup_{i\in[n]}\NRM{\Delta_i}_2^2}\,,
	\end{equation*}
	and\begin{equation*}
		\left|D_{f-\cL}(W,W')\right|\leq \beta \sigma^2 \sup_{i\in[n]}\NRM{w_i-w_i'}_2^2 +\sigma^2\sqrt{\frac{\beta}{n}\NRM{W-W'}_F^2\sup_{i\in[n]}\NRM{w_i-w_i'}_2^2}\,.	
	\end{equation*}
	Assume furthermore that $\cL$ satisfies $D_\cL(W,W')\leq M\frac{\NRM{W-W'}_F^2}{n}$ for all $W,W'\in\R^{d\times n}$. Then, 
	We have:
	\begin{align*}
		\frac{1}{n}\NRM{\hat W_\SVD-W^\star}_F^2&\leq \frac{2M\kappa^2 r}{\mu s} + \frac{4B}{\mu}\left[\sigma_\star^2\left(\frac{\alpha}{s} + \sqrt{\frac{\alpha}{ns}}\right)\right]\kappa\sqrt{nr} \\
		&\quad + \frac{16B^2\beta \sigma^2}{\mu} + \frac{8B^2\sigma_\star^2}{\mu}\left(\alpha   +\sqrt{\alpha}\right) \,.
	\end{align*}
\end{theorem}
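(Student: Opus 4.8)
The plan is to run the same optimality-plus-strong-convexity skeleton as in \Cref{prop:gen}, but with an extra SVD-truncation layer to cope with the fact that the constrained minimizer $\hat W$ is full rank whereas the two hypotheses are only available for matrices of rank at most $s$. First I would record the two facts that the nuclear-norm ball buys us. Writing $R = \hat W - \hat W_\SVD$ for the discarded tail of the SVD, the constraint $\NRM{\hat W}_* \le \kappa B\sqrt{nr}$ gives, via $\sum_{k>s}\lambda_k^2 \le \lambda_{s+1}\NRM{\hat W}_* \le \frac{1}{s}\NRM{\hat W}_*^2$, the Frobenius tail bound $\frac1n\NRM{R}_F^2 \le \frac{\kappa^2 B^2 r}{s}$, together with $\NRM{R}_* \le \NRM{\hat W}_* \le \kappa B\sqrt{nr}$.

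Then I would start from strong convexity applied to the truncated iterate, $\frac{\mu}{2n}\NRM{\hat W_\SVD - W^\star}_F^2 \le D_f(\hat W_\SVD, W^\star) = D_\cL(\hat W_\SVD, W^\star) + D_{f-\cL}(\hat W_\SVD, W^\star)$, using linearity of the Bregman divergence in its generating function. The second summand is handled directly by the Bregman hypothesis: both $\hat W_\SVD$ (rank $\le s$) and $W^\star$ (rank $\le r$) are admissible, and with $\sup_i \NRM{(\hat W_\SVD)_i - w_i^\star} \lesssim 2B$ (up to a tail correction) this produces the term $\frac{16B^2\beta\sigma^2}{\mu}$ together with a cross term in $\sqrt{\frac1n\NRM{\hat W_\SVD - W^\star}_F^2}$ to be absorbed at the end.

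For the first summand I would write $D_\cL(\hat W_\SVD, W^\star) = \cL(\hat W_\SVD) - \cL(W^\star) - \langle \nabla\cL(W^\star), \hat W_\SVD - W^\star\rangle$ and exploit optimality twice: $\cL(\hat W) \le \cL(W^\star)$ lets me replace $\cL(\hat W_\SVD) - \cL(W^\star)$ by $\cL(\hat W_\SVD) - \cL(\hat W)$, which the $M$-smoothness bound $D_\cL \le M\NRM{\cdot}_F^2/n$ controls by $\frac{M}{n}\NRM{R}_F^2 \le \frac{M\kappa^2 B^2 r}{s}$ plus a gradient cross term — this is the origin of $\frac{2M\kappa^2 r}{\mu s}$. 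The remaining linear-in-gradient contributions collect into inner products of $\nabla\cL(W^\star)$ against $\hat W_\SVD - W^\star$ and against the tail $R$. The low-rank part is bounded by the noise hypothesis, while the tail inner product $\langle\nabla\cL(W^\star), R\rangle$ is the delicate piece: I would peel the SVD of $R$ into consecutive rank-$s$ blocks $R = \sum_\ell R^{(\ell)}$, apply the noise hypothesis to each block, and sum using $\sum_\ell \NRM{R^{(\ell)}}_F \le s^{-1/2}\NRM{R}_* \le s^{-1/2}\kappa B\sqrt{nr}$, yielding the term $\frac{4B}{\mu}\sigma_\star^2(\frac{\alpha}{s} + \sqrt{\frac{\alpha}{ns}})\kappa\sqrt{nr}$.

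Finally I would collect all pieces into an inequality of the shape $x^2 \le a x + b$ with $x^2 = \frac1n\NRM{\hat W_\SVD - W^\star}_F^2$, where $a$ gathers the $\sqrt\alpha\sigma_\star^2$ and $\sqrt\beta\sigma^2$ cross terms and $b$ the remaining constants, and close via $x^2 \le 2a^2 + 2b$ exactly as in \Cref{prop:gen}; the $a^2$ contribution is what produces $\frac{8B^2\sigma_\star^2}{\mu}(\alpha + \sqrt\alpha)$. I expect the main obstacle to be precisely the mismatch between the full-rank minimizer and the rank-$\le s$ scope of the two hypotheses: reconciling them forces the SVD truncation, the nuclear-norm tail estimates, and the block-peeling of $\langle\nabla\cL(W^\star), R\rangle$. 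One must also be careful that the truncation inflates neither the column norms nor the effective rank beyond what the hypotheses can absorb — truncating at level $s-r$ (admissible since $s \ge 2r$ keeps $s-r \ge s/2$, costing only a constant in the tail bound) keeps $\hat W_\SVD - W^\star$ of rank $\le s$ — and the $M$-smoothness term is the unavoidable price paid for replacing $\hat W$ by its rank-$s$ approximation.
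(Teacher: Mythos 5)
Your skeleton shares the paper's key ingredients (truncate $\hat W$ at rank $s$, peel the nuclear-norm tail into rank-$s$ blocks with $\sum_\ell \NRM{R^{(\ell)}}_F\le \NRM{R}_*/\sqrt{s}$ exactly as in \Cref{lemma:shelling}, then close with $x^2\le ax+b$ as in \Cref{prop:gen}), but the step where you feed the truncation error back into the optimality inequality has a genuine gap. When you bound $\cL(\hat W_\SVD)-\cL(\hat W)$ by $D_\cL(\hat W_\SVD,\hat W)+\langle \nabla\cL(\hat W),\hat W_\SVD-\hat W\rangle \le \frac{M}{n}\NRM{R}_F^2-\langle\nabla\cL(\hat W),R\rangle$, the linear term involves the gradient at the data-dependent point $\hat W$, not at $W^\star$. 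The only noise hypothesis available controls $\langle\nabla\cL(W^\star),\cdot\rangle$, and since $\hat W$ is a \emph{constrained} minimizer, $\nabla\cL(\hat W)$ does not vanish. Your claim that ``the remaining linear-in-gradient contributions collect into inner products of $\nabla\cL(W^\star)$ against $\hat W_\SVD-W^\star$ and against the tail $R$'' silently replaces $\nabla\cL(\hat W)$ by $\nabla\cL(W^\star)$. Patching this via $\nabla\cL(\hat W)=\nabla\cL(W^\star)+(\nabla\cL(\hat W)-\nabla\cL(W^\star))$ and the $\frac{2M}{n}$-Lipschitzness of $\nabla\cL$ implied by the $M$-assumption produces an extra contribution of order $\frac{M}{n}\NRM{\hat W_\SVD-W^\star}_F\NRM{R}_F$, which after absorption yields a term $\asymp \frac{M^2\kappa^2B^2r}{\mu^2 s}$ — worse than the stated $\frac{2M\kappa^2 r}{\mu s}$ by a condition-number factor $M/\mu$ — so your route does not recover the theorem as stated.

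The paper sidesteps this entirely by never comparing $\cL(\hat W_\SVD)$ to $\cL(\hat W)$. It starts from $D_\cL(\hat W,W^\star)\le\langle\nabla\cL(W^\star),W^\star-\hat W\rangle$ (so every linear term is anchored at $W^\star$ from the outset), and then uses convexity of $\Delta\mapsto D_\cL(W^\star+\Delta,W^\star)$ in the form $D_\cL(W^\star+u+v,W^\star)\ge 2D_\cL(W^\star+\tfrac12 u,W^\star)-D_\cL(W^\star-v,W^\star)$ with $u=\hat W_\SVD-W^\star$ and $v=R$ the shelling tail; the $M$-smoothness is applied only to the \emph{upper} bound $D_\cL(W^\star-v,W^\star)\le \frac{M}{n}\NRM{v}_F^2$, which carries no gradient cross term. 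If you want to keep your decomposition, you must either restructure it along these lines or accept the degraded constant in the first term.
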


\begin{proof}
Importantly, notice that $W^\star$ lies in the optimization set
\begin{equation*}
    \cW=\set{  W\in\R^{d\times n}\,\big|\quad \sup_i \NRM{w_i}\leq B\,,\quad \NRM{W}_*\leq \kappa B\sqrt{nr}}\,.
\end{equation*}
Indeed, by assumption we first have that $\sup_i \NRM{w_i^\star}\leq B$, and $\NRM{W^\star}_*\leq r\lambda_{\max}(W)\leq \kappa\sqrt{rn}$.
The scaling of $\kappa$ then comes from the fact that if $W^\star$ is of rank $r$ and is well-conditioned (its largest and $r-$largest singular values $\lambda_1(W^\star)$ and $\lambda_r(W^\star)$ have a bounded ratio), 
we have \textit{(i)} $r \lambda_r(W^\star)^2\NRM{W^\star}_F^2=\sum_i\NRM{w_i^\star}_2^2\leq nB^2$ leading to $\lambda$ and \textit{(ii)} $\NRM{W^\star}_*\leq r\lambda_1(W^\star)$, so that if $\kappa\geq \frac{\lambda_1(W^\star)}{\lambda_r(W^\star)}$, we have $\NRM{W^\star}_*\leq r\lambda_1(W^\star)\leq \kappa r \lambda_r(W^\star)\leq \kappa B \sqrt{nr}$.
We have:
	\begin{equation*}
		D_{\cL}(\hat W,W^\star)\leq \langle \nabla \hat\cL(W^\star),W^\star-\hat W\rangle\,.
	\end{equation*}
	We first bound the noise term $\langle \nabla \hat\cL(W^\star),W^\star-\hat W\rangle$.
	Let $\Delta= W^\star-\hat W$ and $\Delta=\sum_{k=1}^{K}$ for $K\leq \lceil \frac{\min(n,d)}{s}\rceil$ be its $s$-shelling decomposition, defined as follows.

	\begin{definition}\label{def:shelling}
		Let $W\in\R^{p\times q}$ and $s\in\N^*$.
		Let $W=\sum_{k=1}^{\min(p,q)} \sigma_k u_vv_k^\top$ be the singular value decomposition of $W$, where $\sigma_1\geq\sigma_2\geq\ldots\geq0$.
		The top-$s$-SVD of $W$, abbreviated as $\SVD_s(W)$ is defined as $\SVD_s(W)=\sum_{k=1}^s \sigma_k u_v v_k^\top$.
	\end{definition}
	
	\begin{definition}
		Let $W\in\R^{p\times q}$ and $s\in\N^*$.
		Let $W=\sum_{k\geq 1} \Delta^{(k)}$ be the ‘‘shelling decomposition'' of $W$, where the sequence of matrices $(\Delta^{(k)})_{k\geq1}$ is defined as:
		\begin{enumerate}
			\item $\Delta^{(1)}=\SVD_s(W)$;
			\item Recursively, for $k\geq2$, $\Delta^{(k)}=\SVD_s(W-\sum_{\ell<k}\Delta^{(\ell)})$.
		\end{enumerate}
		Note that $\Delta^{(k)}=0$ for $k\geq \lceil \min(p,q)/s\rceil$.
	\end{definition}
	
	\begin{lemma}\label{lemma:shelling}
		For any $W\in\R^{p\times q}$ and $s\in\N^*$, writing $W=\sum_{k\geq 1}\Delta^{(k)}$ the shelling decomposition of $W$, we have:
	
		\begin{equation*}
			\sum_{k\geq 2 } \sup_{i\in[q]} \NRM{\Delta^{(k)}_i}_2 \leq \frac{\NRM{W}_*}{s} \,,
		\end{equation*}
		and	
		\begin{equation*}
			\sum_{k\geq 2} \NRM{\Delta^{(k)}}_F\leq \frac{\NRM{W}_*}{\sqrt{s}}\,.
		\end{equation*}
	\end{lemma}

	We thus have:
	\begin{align*}
		\langle \nabla \hat\cL(W^\star),W^\star-\hat W\rangle&\leq \sum_{k\geq 1}\langle \nabla \hat\cL(W^\star),\Delta^k\rangle\\
		&\leq \sum_{k\geq 1}\alpha \sigma_\star^2 \sup_{i\in[n]}\NRM{\Delta_i}_2^2 +\sigma_\star^2\sqrt{\frac{\alpha}{n}\NRM{\Delta}_F^2\sup_{i\in[n]}\NRM{\Delta_i}_2^2}\\
		&\leq 4B^2\left(\alpha \sigma_\star^2  +\sigma_\star^2\sqrt{\alpha}\right)+\sum_{k\geq 2}\alpha \sigma_\star^2 \sup_{i\in[n]}\NRM{\Delta_i}_2^2 +\sigma_\star^2\sqrt{\frac{\alpha}{n}\NRM{\Delta}_F^2\sup_{i\in[n]}\NRM{\Delta_i}_2^2}\\
		&\leq 4B^2\sigma_\star^2\left(\alpha   +\sqrt{\alpha}\right) + \sigma_\star^2\NRM{\Delta}_*\left(\frac{\alpha}{s} + B\sqrt{\frac{\alpha}{ns}}\right)\\
		&\eqdef \eps_0\,. 
	\end{align*}
	Then, we have that $\NRM{\Delta}_*\leq 2\kappa\sqrt{nr}$,
	and thus 
	\begin{equation*}
		\eps_0\leq 4B^2\sigma_\star^2\left(\alpha   +\sqrt{\alpha}\right) + 2\kappa\sqrt{nr}\sigma_\star^2\left(\frac{\alpha}{s} + B\sqrt{\frac{\alpha}{ns}}\right)\,.
	\end{equation*}
	We now need to lower bound $D_{\cL}(\hat W,W^\star)$.
	Let $\hat W=\sum_{k=1}^K\hat\Delta^k$ be its $s$-shelling decomposition, and let $\tilde\Delta^k=\hat \Delta^k$ for $k\geq 2$ and $\tilde\Delta^1=\hat\Delta^1-W^\star$.
	Let $\tilde{\Delta}=\hat W-W^\star=\sum_k\tilde{\Delta}^k$.
	We have:
	\begin{align*}
		D_{\cL}(\hat W,W^\star)&=D_{\cL}( W^\star +\tilde{\Delta},W^\star)\\
		&=D_{\cL}( W^\star +\sum_{k=1}^K\tilde{\Delta}^k,W^\star)\\
		&=D_{\cL}( W^\star +\tilde{\Delta}^1+\sum_{k=2}^K\tilde{\Delta}^k,W^\star)\\
		&\geq D_{\cL}( W^\star +\tilde{\Delta}^1+\sum_{k=2}^K\tilde{\Delta}^k,W^\star)\\
		&\geq 2D_{\cL}( W^\star + \frac{1}{2}\tilde{\Delta}^1,W^\star)-D_{\cL}( W^\star - \sum_{k=2}^K\tilde{\Delta}^k,W^\star)\\
		&\geq 2D_{\cL}( W^\star + \frac{1}{2}\tilde{\Delta}^1,W^\star)-\frac{1}{K-1}\sum_{k=2}^KD_{\cL}( W^\star - (K-1)\tilde{\Delta}^k,W^\star)\,,
	\end{align*}
	where we used twice the convexity of $D_\cL(W^\star+\cdot,W^\star)$.

	Then, 
	\begin{align*}
		D_{\cL}( W^\star + \frac{1}{2}\tilde{\Delta}^1,W^\star)&=D_{f}( W^\star + \frac{1}{2}\tilde{\Delta}^1,W^\star)+ D_{\cL-f}( W^\star + \frac{1}{2}\tilde{\Delta}^1,W^\star)\\
		&\geq D_{f}( W^\star + \frac{1}{2}\tilde{\Delta}^1,W^\star) - \sigma^2\left(\beta \sup_i\NRM{\tilde{\Delta}^1_i}_2^2+\sqrt{\frac{\beta}{n}\NRM{\tilde{\Delta}^1}^2_F\sup_i\NRM{\tilde{\Delta}^1_i}_2^2}\right)\\
		&\geq D_{f}( W^\star + \frac{1}{2}\tilde{\Delta}^1,W^\star) - \sigma^2\left(4B^2\beta+ 2B\sqrt{\frac{\beta}{n}\NRM{\tilde{\Delta}^1}^2_F}\right)\,,
	\end{align*}
	and using the property with constant $M$:
	\begin{align*}
		D_{\cL}( W^\star - \sum_{k=2}^K\tilde{\Delta}^k,W^\star)& \leq \frac{M}{n}\NRM{\sum_{k=2}^K\tilde{\Delta}^k}_F^2\\
		&\leq  \frac{M\NRM{W^\star}_*^2}{ns}\\
		&\leq \frac{M\kappa^2 r}{s}\\
	\end{align*}
	Now, wrapping everything up,
	\begin{align*}
		2(f(W^\star +\tilde\Delta^1/2)-f(W^\star))&\leq\frac{M\kappa^2 r}{s}+ 2\left[\sigma_\star^2\left(\frac{\alpha}{s} + B\sqrt{\frac{\alpha}{ns}}\right)\right]\kappa\sqrt{nr} \\
		&\quad + \sigma^2\left(4B^2\beta+ 2B\sqrt{\frac{\beta}{n}\NRM{\tilde\Delta^1}^2_F}\right)  + 4B^2\sigma_\star^2\left(\alpha   +\sqrt{\alpha}\right) \,.
	\end{align*}
	Then, if each $f_i$ is $\mu$ strongly convex, $f$ is $\mu/n$ strongly convex and:
	\begin{align*}
		\frac{\mu}{n}\NRM{\tilde\Delta^1}_F^2&\leq \frac{M\kappa^2 r}{s}+2\left[\sigma_\star^2\left(\frac{\alpha}{s} + B\sqrt{\frac{\alpha}{ns}}\right)\right]\kappa\sqrt{nr} \\
		&\quad + \sigma^2\left(4B^2\beta+ 2B\sqrt{\frac{\beta}{n}\NRM{\tilde\Delta^1}^2_F}\right)  + 4B^2\sigma_\star^2\left(\alpha   +\sqrt{\alpha}\right) \,.
	\end{align*}
	Thus,
	\begin{align*}
		\frac{1}{n}\NRM{\hat W_\SVD-W^\star}_F^2&\leq \frac{2M\kappa^2 r}{\mu s} + \frac{4}{\mu}\left[\sigma_\star^2\left(\frac{\alpha}{s} + B\sqrt{\frac{\alpha}{ns}}\right)\right]\kappa\sqrt{nr} \\
		&\quad + \frac{16B^2\beta \sigma^2}{\mu} + \frac{8B^2\sigma_\star^2}{\mu}\left(\alpha   +\sqrt{\alpha}\right) \,.
	\end{align*}

\end{proof}

\subsection{Proof of \Cref{thm:nuc_norm_main}}

The first two assumptions of \Cref{thm:nuc_norm_gen} are satisfied with high probability for $\alpha,\beta=\tilde\cO(\frac{s(d+n)}{nm})$, as proved in the proof of \Cref{thm:low_rank}. We now prove that the third and last assumption is satisfied for $M=L+\tilde\cO(\sigma^2d/m)$ with high probabiliy.

\begin{proof}

\begin{lemma}
		With probability $1-e^{-cmd}$, for all $i\in[n]$, for all $w,w'\in\R^d$, 
		\begin{equation*}
			D_{\cL_i}(w,w')\leq \left(L + c'\sigma^2\frac{d}{m}\ln(dHB^2/\sigma^2) \right) \NRM{w-w'}^2
		\end{equation*}
	\end{lemma}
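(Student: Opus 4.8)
The plan is to reduce the Bregman-divergence bound to a uniform control of the largest eigenvalue of the empirical Hessian $\nabla^2\cL_i(w)=\frac{1}{m}\sum_{j=1}^m\nabla^2 F_i(w,\xi_{ij})$. By a second-order Taylor expansion, for every $w,w'$ there is a point $\tilde w\in[w,w']$ with $D_{\cL_i}(w,w')=\frac12(w-w')^\top\nabla^2\cL_i(\tilde w)(w-w')\leq\frac12\lambda_{\max}\!\big(\nabla^2\cL_i(\tilde w)\big)\NRM{w-w'}^2$, so it is enough to prove that $\sup_w\lambda_{\max}(\nabla^2\cL_i(w))\leq L+c'\sigma^2\frac{d}{m}\ln(dHB^2/\sigma^2)$ on the prescribed event (the stray factor $\frac12$ is absorbed into the constant). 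I would then split $\nabla^2\cL_i(w)=\nabla^2 f_i(w)+E_i(w)$ with $E_i(w)=\nabla^2\cL_i(w)-\nabla^2 f_i(w)$; the population term contributes at most $L$ by $L$-smoothness, so the task becomes a uniform operator-norm bound on the centered fluctuation $E_i(w)$.

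For fixed $w$ and fixed $u\in\cS^{d-1}$, the scalar $\langle u,E_i(w)u\rangle$ is the average of $m$ i.i.d.\ centered variables $\langle u,\nabla^2 F_i(w,\xi_{ij})u\rangle-\E\langle u,\nabla^2 F_i(w,\xi_i)u\rangle$, each of which is $\sigma^2$-subexponential by \Cref{hyp:hessian} (the same centering lemma invoked in \Cref{lem:breg_concentration} preserves the subexponential constant up to a universal factor). Bernstein's inequality for subexponential averages gives $\proba{|\langle u,E_i(w)u\rangle|>t}\leq 2\exp\!\big(-c\,m\min(t^2/\sigma^4,\,t/\sigma^2)\big)$, and a standard $\frac14$-net of $\cS^{d-1}$ (cardinality $e^{cd}$) upgrades this, at fixed $w$, to the same tail for $\NRM{E_i(w)}_\op$ at the cost of a factor $e^{cd}$ in the failure probability.

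The main obstacle—and the origin of the logarithmic factor—is uniformity over $w$, since the Taylor point $\tilde w$ is sample-dependent and cannot be fixed in advance. I would discretize the relevant range of $w$ by an $\eta$-net and exploit that $w\mapsto\nabla^2 F_i(w,\xi)$ is $H$-Lipschitz, hence $w\mapsto E_i(w)$ is $2H$-Lipschitz in operator norm uniformly over the samples, so the discretization error is at most $2H\eta$; taking $\eta$ of order $\sigma^2/(mH)$ makes this error lower order while inflating the net cardinality only logarithmically, which yields the $\ln(dHB^2/\sigma^2)$ term. Combining the sphere-net and $w$-net union bounds with the Bernstein tail, optimizing over $t$, and taking a final union bound over $i\in[n]$ delivers the claim. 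The one genuinely delicate point is the bookkeeping that matches the deviation level $t$ simultaneously against the ambient dimension $d$ and the prescribed failure probability; fortunately, in the regime $m\gg\sqrt{rd}$ relevant to \Cref{thm:nuc_norm_main} this fluctuation term is dominated by $L$, so only a crude such bound is needed and the precise constants are immaterial.
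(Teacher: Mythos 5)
Your proposal is correct and follows essentially the same route as the paper: a second-order Taylor expansion, the subexponential Bernstein bound from \Cref{hyp:hessian} applied to the quadratic forms $(w-w')^\top\nabla^2F_i(v,\xi_{ij})(w-w')$, and $\eps$-net discretization made possible by the $H$-Lipschitzness of the Hessian, which produces the logarithmic factor. The only cosmetic difference is that you control $\NRM{E_i(w)}_\op$ via a net of the sphere of directions while the paper nets directly over the pairs $(w,w')$ and the Taylor point $v\in[w,w']$; the bookkeeping and resulting bound are the same.
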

	\begin{proof}
		We have, for some $v\in[w,w']$:
		\begin{align*}
			D_{\cL_i}(w,w')&=\frac{1}{2}\NRM{w-w'}^2_{\nabla^2\cL_i(v)}\\
			&=\frac{1}{m}\sum_{j} (w-w')^\top 
			\nabla^2F_i(v,\xi_{ij}) (w-w')\,.	
		\end{align*}
		Assume that $\NRM{w-w'}=1$.
		Now, all $(w-w')^\top 
		\nabla^2F_i(v,\xi_{ij}) (w-w')$ are $\sigma^2-$subexponential and independent, of mean $(w-w')^\top 
		\nabla^2f_i(v) (w-w')\leq L$, leading to:
		\begin{align*}
			\proba{ \sum_{j} (w-w')^\top 
			\nabla^2F_i(v,\xi_{ij}) (w-w') \geq mL + \sigma^2t }\leq 2\exp\left(-c\min(\frac{t^2}{m},t)\right)\,,
		\end{align*}
		so that:
		\begin{align*}
			\proba{\frac{1}{m} \sum_{j} (w-w')^\top 
			\nabla^2F_i(v,\xi_{ij}) (w-w') \geq L + \sigma^2t }\leq 2\exp\left(-cm\min(t^2,t)\right)\,.
		\end{align*}
		Using discretization arguments as before, both on $v\in[w,w']$ parameterized by some $\lambda\in[0,1]$ and on $w,w'$, we get that, with probability
		$1-e^{-cmd}$, for all $w,w'$ such that $\NRM{w},\NRM{w'}\leq B$,
		\begin{equation*}
			D_{\cL_i}(w,w')\leq \left(L + c'\sigma^2\frac{d}{m}\ln(dHB^2/\sigma^2) \right) \NRM{w-w'}^2
		\end{equation*}

\end{proof}

	\end{proof}

\end{document}